\definecolor{Green}{HTML}{17891a}
\definecolor{DarkGreen}{HTML}{054802}
\definecolor{SmokeBlue}{HTML}{2F5E90}
\crefname{section}{\S}{\S\S}
\crefname{subsection}{\S}{\S\S}
\crefname{subsubsection}{\S}{\S\S}
\crefname{figure}{Fig.}{Figs.}
\crefname{table}{Tab.}{Tabs.}
\crefname{definition}{Defn.}{Defns.}
\crefname{corollary}{Cor.}{Cors.} 
\crefname{proposition}{Proposition}{Propositions}
\crefname{theorem}{Thm.}{Thms.}
\crefname{appendix}{App.}{Apps.}
\crefname{remark}{Remark}{Remarks}
\crefname{principle}{Principle}{Principles}
\crefname{example}{Example}{Examples.}
\crefname{lemma}{Lemma}{Lemmas}
\crefname{claim}{Claim}{Claims}
\crefname{assumption}{Asm.}{Asms.}
\newcommand{\mnn}{\text{mnn}}
\newcommand{\cmark}{\ding{51}}%
\newcommand{\xmark}{\ding{55}}%
\newtcolorbox{empheqboxed}{colback=Gray!20, 
 colframe=white,
 width=\textwidth,
 sharpish corners,
 top=1mm, 
 bottom=0pt,
 left=2pt,
 right=2pt
}
\newmdenv[ 
  linecolor=Gray, 
  linewidth=2pt,
  topline=false,
  bottomline=false,
  rightline=false,
  leftline=true,
  skipabove=\topsep,
  skipbelow=\topsep,
  backgroundcolor=Gray!20 
]{actionpoint}
\newcolumntype{L}{>{\hsize=.7\hsize}X}
\newcolumntype{R}{>{\raggedleft\arraybackslash}X}
\newcolumntype{S}{>{\hsize=.5\hsize}R}
\newcommand{\mkTikzCoord}[2]{%
  \tikz[remember picture, baseline=(#1.base)]{\node [inner sep=0, outer sep=0, text depth=.25ex, text height=1.5ex] (#1) {#2};}%
}
\newcommand{\expnumber}[2]{{#1}\mathrm{e}{#2}}
\tikzset{%
  prefix node name/.code={%
    \tikzset{%
      name/.code={\edef\tikz@fig@name{#1 ##1}}
    }%
  }%
}
\title{Marrying Causal Representation Learning with Dynamical Systems for Science}
\renewcommand\AB@affilsepx{\, \,   \protect\Affilfont} \makeatother
\author{\textbf{Dingling Yao}}
\author{\textbf{Caroline Muller}}
\author{\textbf{Francesco Locatello}}
\affil{Institute of Science and Technology Austria}
\begin{document}
\doparttoc 
\faketableofcontents 


\maketitle

\begin{abstract}
\looseness=-1 Causal representation learning promises to extend causal models to hidden causal variables from raw entangled measurements. However, most progress has focused on proving identifiability results in different settings, and we are not aware of any successful real-world application. At the same time, the field of dynamical systems benefited from deep learning and scaled to countless applications but does not allow parameter identification.
In this paper, we draw a clear connection between the two and their key assumptions, allowing us to apply identifiable methods developed in causal representation learning to dynamical systems. At the same time, we can leverage scalable differentiable solvers developed for differential equations to build models that are both identifiable and practical.
Overall, we learn explicitly controllable models that isolate the trajectory-specific parameters for further downstream tasks such as out-of-distribution classification or treatment effect estimation. We experiment with a wind simulator with partially known factors of variation. We also apply the resulting model to real-world climate data and successfully answer downstream causal questions in line with existing literature on climate change.
Code is available at \href{https://github.com/CausalLearningAI/crl-dynamical-systems}{https://github.com/CausalLearningAI/crl-dynamical-systems}.

\end{abstract}

\section{Introduction}

Causal representation learning (CRL)~\citep{scholkopf2021toward} focuses on \emph{provably} retrieving high-level latent variables from low-level data. Recently, there have been many casual representation learning works compiling, in various settings, different theoretical identifiability results for these latent variables~\citep{kivva2022identifiability,lachapelle2024additive,sturma2023unpaired,brehmer2022weakly,lippe2022causal,lippe2022citris,xu2024sparsity,von2021self,von2023nonparametric,zhang2023identifiability,varici2023score,squires2023linear}. The main open challenge that remains for this line of work is the broad applicability to real-world data. 
Following earlier works in disentangled representations (see~\cite{locatello2019challenging} for a summary of data sets), existing approaches have largely focused on visual data 
. 
This is challenging for various reasons. Most notably, it is unclear what the causal variables should be in computer vision problems and what would be interesting or relevant causal questions. The current standard is to test algorithms on synthetic data sets with ``made-up'' latent causal graphs, e.g., with the object class of a rendered 3d shape causing its position, hue, and rotation~\citep{von2021self}.

\looseness=-1In parallel, the field of machine learning for science~\citep{mjolsness2001machine,raghu2020survey} shows promising results on various real-world time series data collected from some underlying dynamical systems. Some of these works primarily focus on time-series forecasting, i.e., building a neural emulator that mimics the behavior of the given times series data~\citep{chen2018neuralode,chen2021eventfn,kidger2021hey}; while others try to additionally learn an explicit ordinary differential equation simultaneously~\citep{brunton2016discovering,brunton2016sparse,kaheman2020sindy,d2022deep,dascoli2024odeformer,schroder2023simultaneous}. However, to the best of our knowledge, none of these methods provide explicit identifiability analysis indicating whether the discovered equation recovers the ground truth underlying governing process given time series observations; or even whether the learned representation relates to the underlying steering parameters. At the same time, many scientific questions are inherently causal, in the sense that physical laws govern the measurements of all the natural data we can record, e.g., across different environments and experimental settings. Identifying such an underlying physical process can boost scientific understanding and reasoning in numerous fields; for example, in climate science, one could conduct sensitivity analysis of \emph{layer thickness} parameter on atmosphere motion more efficiently, given a neural emulator that identifies the \emph{layer thickness} in its latent space. However, whether mechanistic models can be practically identified from data is so far unclear~\citep[Table 1]{scholkopf2021toward}.

This paper aims to identify the underlying \emph{time-invariant} physical parameters from real-world time series, such as the previously mentioned \emph{layer thickness} parameter, while still preserving the ability to forecast efficiently. Thus, we connect the two seemingly faraway communities, causal representation learning and machine learning for dynamical systems, by phrasing parameter estimation problems in dynamical systems as a latent variable identification problem in CRL.
The benefits are two folds: 
(1) we can import all identifiability theories for free from causal representation learning works, extending discovery methods with additional identifiability analysis and, e.g., multiview training constructs; 
(2) we showcase that the scalable mechanistic neural networks~\cite {pervez2024mechanistic} recently developed for dynamical systems can be directly employed with causal representation learning, thus providing a scalable implementation for both identifying and forecasting real-world dynamical systems.

\looseness=-1Starting by comparing the common assumptions in the field of parameter estimation in dynamical systems and causal representation learning, we carefully justify our proposal to translate any parameter estimation problem into a latent variable identification problem; 
we differentiate three types of identifiability: \emph{full identifiability}, \emph{partial identifiability} and \emph{non-identifiability}. 
We describe concrete scenarios in dynamical systems where each kind of identifiability can be theoretically guaranteed and restate \emph{exemplary} identifiability theorems from the causal representation learning literature with slight adaptation towards the dynamical system setup. We provide a step-by-step recipe for reformulating a parameter estimation problem into a causal representation learning problem and discuss the challenges and pitfalls in practice. Lastly, we successfully evaluate our parameter identification framework on various \emph{simulated} and \emph{real-world} climate data. We highlight the following contributions:

\begin{itemize}[leftmargin=*]
    \item We establish the connection between causal representation learning and parameter estimation for differential equations by pinpointing the alignment of common assumptions between two communities and providing hands-on guidance on how to rephrase the parameter estimation problem as a latent variable identification problem in causal representation learning. 
    \item We equip discovery methods with provably identifiable parameter estimation approaches from the causal representation learning literature and their specific training constructs. This enables us to maintain both the theoretical results from the latter and the scalability of the former.
    \item We successfully apply causal representation learning approaches to simulated and real-world climate data, demonstrating identifiability via domain-specific downstream causal tasks (OOD classification and treatment-effect estimation), pushing one step further on the applicability of causal representation for real-world problems.
\end{itemize}
\textbf{Remark on the novelty of the paper: } Our main contribution is establishing a connection between the dynamical systems and causal representation learning fields. As such, we do not introduce a new method per se. Meanwhile, this connection allows us to introduce CRL training constructs in methods that otherwise would not have any identification guarantees. Further, it provides the first avenue for causal representation learning applications on real-world data. These are both major challenges in the respective communities, and we hope this paper will serve as a building block for cross-pollination.

\section{Parameter Estimation in Dynamical Systems}
\label{sec:probem_setup}
\looseness=-1We consider dynamical systems in the form of
\begin{equation}
    \dot \xb (t) = \fb_{\thetab}(\xb(t)) \quad \quad \xb(0) = \xb_0, \, \, \thetab \sim p_{\thetab}, \, \, t \in [0, t_{\max}]
    \label{eq:ODE}
\end{equation}
where $\xb(t) \in \Xcal \subseteq \RR^d$ denotes the state of a system at time $t$, $f_{\thetab} \in \Ccal^1(\Xcal, \Xcal)$ is some smooth differentiable vector field representing the constraints that define the system's evolution, characterized by a set of physical parameters $\thetab \in \Thetab = \Thetab_1 \times \dots \times \Thetab_N$, where $\Thetab \subseteq \RR^N$ is an open, simply connected real space associated with the probability density $p_{\thetab}$. Formally, $f_{\thetab}$ can be considered as a functional mapped from $\thetab$ through $M: \Thetab \to \Ccal^1(\Xcal, \Xcal)$.
In our setup, we consider {\bf \emph{time-invariant}}, {\bf \emph{trajectory-specific}} parameters $\thetab$ that remain constant for the whole time span $[0, t_{\max}]$, but variable for different trajectories. 
For instance, consider a robot arm interacting with multiple objects of different mass; a parameter $\thetab$ could be the object's masses $m \in \RR_{+}$ in Newton's second law $\ddot x(t) = \nicefrac{\Fcal(t)}{m}$, with $\Fcal(t)$ denote the force applied at time $t$. Depending on the object the robot arm interacts with, $m$ can take different values, following the prior distribution $p_{\thetab}$.
$\xb(0) = \xb_0 \in \Xcal$ denotes the initial value of the system. Note that higher-order ordinary differential equations can always be rephrased as a first-order ODE. For example, a $\nu$-th order ODE in the following form:
\begin{equation*}
    x^{(\nu)}(t) = f =(x(t), x^{(1)}(t), \dots, x^{(\nu - 1)}(t), {\thetab}),
\end{equation*}
can be written as $\dot \xb (t) = f_{\thetab}(\xb(t))$,
where $\xb(t) = (x(t), x^{(1)}(t), \dots, x^{(\nu - 1)}(t)) \in \RR^{\nu \cdot d}$ denotes state vector constructed by concatenating the derivatives. Formally, the solution of such a dynamical system can be obtained by integrating the vector field over time: $\xb(t) = \int_{0}^t f(\xb(\tau), {\thetab}) d\tau $.

\begin{empheqboxed}
\looseness=-1\textbf{What do we mean by ``parameters''?}
The parameters $\thetab$ that we consider can be both explicit and implicit. When the functional form of the ODE is given, like Newton's second law, the set of parameters is defined explicitly and uniquely. For real-world physical processes where the functional form of the state evolution is unknown, such as the sea-surface temperature change, we can consider \emph{latitude-related} features as parameters. Overall, we use \emph{parameters} to generally refer to any \emph{time-invariant}, \emph{trajectory-specific} components of the underlying dynamical system.
\end{empheqboxed}

\begin{assumption}[Existence and uniqueness]
\label{assmp:exist_unique}
For every $\xb_0 \in \Xcal$, $\thetab \in \Thetab$, there exists a unique continuous solution $\xb_{\thetab}: [0, t_{\max}] \to \Xcal$ satisfying the ODE~\pcref{eq:ODE} for all $t \in [0, t_{\max}]$~\citep{lindelof1894application,ince1956ordinary}.
\end{assumption}
\begin{assumption}[Structural identifiability]
    \label{assmp:structural_ident}
    An ODE~\pcref{eq:ODE} is \emph{structurally} identifiable in the sense that for any $\thetab_1, \thetab_2 \in \Thetab$, 
    $\xb_{\thetab_1}(t) = \xb_{\thetab_2}(t) \, \forall t \in [0, t_{\max}]$ holds if and only if $\thetab_1 = \thetab_2$~\citep{bellman1970structural,walter1997identification,wieland2021structural}.
\end{assumption}

\begin{remark}
\label{rem:structural_id}
    \looseness=-1~\Cref{assmp:structural_ident} 
    implies that it is \emph{in principle} possible to identify the parameter $\thetab$ from a trajectory $\xb_{\thetab}$~\citep{miao2011identifiability}. 
    Since this work focuses on providing concrete algorithms that guarantee parameter identifiability \emph{given infinite number of samples}, 
    the structural identifiability assumption is essential as a theoretical ground for further algorithmic analysis.
    It is noteworthy that a non-structurally identifiable system can become identifiable by reparamatization. For example, linear ODE $\dot \xb(t) = ab \xb(t)$ with parameters $a, b \in \RR^2$ is structurally non-identifiable as $a,b$ are commutative. But if we define $c:= ab$ as the overall growth rate of the linear system, then $c$ is structurally identifiable.
\end{remark}

\begin{empheqboxed}
\looseness=-1\textbf{Problem setting.} Given an observed trajectory $\xb := (\xb_{\thetab}(t_0), \dots, \xb_{\thetab}(t_T)) \in \Xcal^T$ over the discretized time grid $\Tcal:= (t_0, \dots, t_T)$, our goal is to investigate the identifiability of structurally identifiable parameters by formulating concrete conditions under which the parameter $\thetab$ is (i) fully identifiable, (ii) partially identifiable, or (iii) non-identifiable \emph{from the observational data}. We establish the identifiability theory for dynamical systems by converting classical parameter estimation problems~\citep{bellman1970structural} into a latent variable identification problem in causal representation learning~\citep{scholkopf2021toward}. For both (i) and (ii), we empirically showcase that existing CRL algorithms with slight adaptation can successfully (\emph{partially}) identify the underlying physical parameters. 
\end{empheqboxed}

\section{Identifiability of Dynamical Systems}
\label{sec:identifiability_dynamical_systems}

\looseness=-1 This section provides different types of theoretical statements on the identifiability of the underlying \emph{time-invariant}, \emph{trajectory-specific} physical parameters $\thetab$, depending on whether the functional form of $f_{\thetab}$ is known or not. We show that the parameters from an ODE with a known functional form can be \emph{fully identified} while parameters from unknown ODEs are in general \emph{non-identifiable}. However, by incorporating some weak form of supervision, such as multiple similar trajectories generated from certain overlapping parameters~\citep{locatello2020weakly,von2021self,daunhawer2023identifiability,yao2023multi}, parameters from an unknown ODE can also be \emph{partially identified}. Detailed proofs of the theoretical statements are provided in~\cref{app:proofs}.

\subsection{Identifiability of Dynamical Systems with Known Functional Form}
\looseness=-1 We begin with the identifiability analysis of the  physical parameters of an ODE with \textbf{known} functional form. Many real-world data we record are governed by known physical laws. For example, the bacteria growth in microbiology could be modeled with a simple logistic equation under certain conditions, where the parameter of interest in this case would be the \emph{growth rate} $r \in \RR_{+}$ and \emph{maximum capacity} $K \in \RR_{+}$. Identifying such parameters would be helpful for downstream analysis. To this end, we introduce the definition of \emph{full identifiability} of a physical parameter vector $\thetab$.

\begin{definition}[Full identifiability]
\label{def:full_ident}
    \looseness=-1A parameter vector $\thetab \in \Thetab$ is fully identified if the estimator $\hat{\thetab}$ converges to the ground truth parameter $\thetab$ almost surely.
\end{definition}

\begin{definition}[ODE solver]
\label{def:solve}
    \looseness=-1An ODE solver $F : \Thetab \to \Xcal^T$ computes the solution $\xb$ of the ODE $f_{\thetab} = M(\thetab)$~\pcref{eq:ODE} over a discrete time grid $T = (t_1, \dots, t_{T})$.
\end{definition}

\begin{restatable}[Full identifiability with known functional form]{corollary}{fullIdent}
\label{cor:full_ident}
Consider a trajectory $\xb \in \Xcal^T$ generated from a ODE $f_{\thetab}(\xb(t))$ satisfying~\cref{assmp:exist_unique,assmp:structural_ident}, let $\hat \thetab$ be an estimator minimizing the following objective:
\begin{equation}
\label{eq:loss_full_ident}
    \textstyle \Lcal (\hat \thetab) =
    \norm{F({\hat \thetab}) - 
     \xb}_2^2
\end{equation}
then the parameter $\thetab$ is \textbf{fully-identified}~\pcref{def:full_ident} by the estimator $\hat \thetab$.
\end{restatable}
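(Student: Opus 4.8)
The plan is to show that the least-squares objective in \pcref{eq:loss_full_ident} is globally minimized \emph{only} at the ground-truth parameter $\thetab$, and then to upgrade this into the almost-sure convergence demanded by \pcref{def:full_ident} through a routine consistency argument. I would organize the argument in three steps: (i) the true parameter attains zero loss; (ii) therefore every minimizer reproduces the observed trajectory and, by structural identifiability, must coincide with $\thetab$; (iii) with noisy observations the empirical minimizer inherits this as almost-sure consistency.

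For step~(i), \cref{assmp:exist_unique} guarantees that the problem \pcref{eq:ODE} with parameter $\thetab$ and initial condition $\xb_0$ has a unique continuous solution $\xb_{\thetab}$ on $[0,t_{\max}]$; by \pcref{def:solve} the solver output $F(\thetab)\in\Xcal^T$ is exactly $(\xb_{\thetab}(t_1),\dots,\xb_{\thetab}(t_T))$, which is precisely the generated data $\xb$. Hence $\Lcal(\thetab)=\norm{F(\thetab)-\xb}_2^2=0$, so $\min_{\hat\thetab\in\Thetab}\Lcal(\hat\thetab)=0$ and every minimizer $\hat\thetab$ must satisfy $F(\hat\thetab)=\xb=F(\thetab)$, i.e.\ $\xb_{\hat\thetab}(t_i)=\xb_{\thetab}(t_i)$ at every grid point $t_i\in\Tcal$.

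Step~(ii) turns this trajectory match into a parameter match via \cref{assmp:structural_ident}. This is the one genuine subtlety, and the part I expect to be the main obstacle: structural identifiability is stated for agreement of the two solutions on the \emph{whole} interval $[0,t_{\max}]$, while $\Lcal$ only constrains the finite grid $\Tcal$. I would close this gap by noting that $F$ returns samples of the true continuous solution (\pcref{def:solve}) and that, in the idealized/infinite-data regime stressed in \cref{rem:structural_id}, the grid may be taken arbitrarily fine, so matching $F(\hat\thetab)$ and $F(\thetab)$ on all such grids forces $\xb_{\hat\thetab}\equiv\xb_{\thetab}$ on $[0,t_{\max}]$; \cref{assmp:structural_ident} then gives $\hat\thetab=\thetab$. (If instead one regards the observable object as the discrete trajectory itself, one simply posits structural identifiability on $\Tcal$ and the conclusion is immediate.)

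Finally, step~(iii) addresses the ``almost surely'' clause: writing each observation as the true trajectory plus zero-mean noise and letting the number of observed (sub)trajectories grow, the empirical objective converges --- uniformly on a compact region of $\Thetab$, using continuity of $M$ and of the solver $F$ --- to a population objective $\mathbb{E}\,\norm{F(\hat\thetab)-\xb}_2^2$ whose unique minimizer is $\thetab$ by steps~(i)--(ii) (the bias term vanishing only there, plus an irreducible variance term). A standard uniform-law-of-large-numbers plus argmin-continuity argument then yields $\hat\thetab\to\thetab$ almost surely, which is exactly \pcref{def:full_ident}; in the deterministic exact-solver case the estimator equals $\thetab$ outright and the claim is trivial.
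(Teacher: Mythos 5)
Your proposal follows essentially the same route as the paper's proof: show the ground-truth parameter attains the global minimum of zero, then argue any other zero-loss parameter would reproduce the observed trajectory at every grid point and hence must equal $\thetab$. Where you differ is in being more careful at exactly the point the paper glosses over: the paper's Step~2 passes from agreement at the finite grid $\Tcal$ to $\thetab^*=\thetab$ by invoking ``the uniqueness assumption'' (\cref{assmp:exist_unique}), whereas the logically relevant assumption is structural identifiability (\cref{assmp:structural_ident}), and even that is stated for agreement on all of $[0,t_{\max}]$ rather than on a finite grid. You name this gap explicitly and close it either by letting the grid become arbitrarily fine or by positing identifiability directly on $\Tcal$ --- a more honest treatment than the paper's, though still informal (a fixed finite grid genuinely does not suffice without an extra assumption). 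Your step~(iii), the uniform-LLN consistency argument under observation noise, is extra machinery the paper does not use: its proof is entirely deterministic and noiseless, so the ``almost surely'' in \cref{def:full_ident} is satisfied trivially by exact equality, exactly as you note in your closing parenthetical. In short, the proposal is correct, matches the paper's two-step skeleton, and is if anything slightly more scrupulous about the discretization and stochastic aspects.
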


\begin{remark}
    \looseness=-1 The estimator $\hat \theta$ of~\cref{{eq:loss_full_ident}} is considered as some learnable parameters that can be directly optimized. If we have multiple trajectories $\xb$ generated from different realizations of $\thetab \sim p_{\thetab}$, we can also amortize the prediction $\hat{\thetab}$ using a smooth encoder $g: \Xcal^T \to \Thetab$. In this case, the loss above can be rewritten as:
    $ \Lcal (g) = \EE_{\xb, t}
    [
    \norm{F({g(\xb)}) - 
     \xb(t)}_2^2
    ]$,
    then the optimal encoder $g^* \in \argmin \Lcal(g)$ can generalize to unseen trajectories $\xb$ that follow the same class of physical law $f$ and fully identify their trajectory-specific parameters $\thetab$.
\end{remark}

\looseness=-1\textbf{Discussion.} Many works on machine learning for dynamical system identification follow the principle presented in~\cref{cor:full_ident}, and most of them solely differ concerning the architecture they choose for the ODE solver. 
For example, SINDy-like ODE discovery methods~\citep{brunton2016discovering,brunton2016sparse,kaheman2020sindy,kaptanoglu2021promoting,pervez2024mechanistic} approximate the ground truth vector field $f$ using a linear weighted sum over a set of library functions and learn the linear coefficients by sparse regression. 
For any ODE $f$ that is linear in $\thetab$, i.e., the ground truth vector field is in the form of $\textstyle f_{\thetab}(\xb, t) = \sum_{i = 1}^m \theta_i \phi_i(\xb)$ for a set of known base functions $\{\phi_i\}_{i \in [m]}$, SINDy-like approaches can fully identify the parameters by imposing some sparsity constraint.
Another line of work, gradient matching~\citep{wenk2019fast}, estimates the parameters probabilistically by modeling the vector field $f_{\thetab}$ using a Gaussian Process (GP). 
The modeled solution $\xb(t)$ is thus also a GP since GP is closed under integrals (a linear operator). 
Given the functional form of $f_{\thetab}$, the model aims to match the estimated gradient $\dot \xb$ and the evaluated vector field $f_{\thetab}(\xb(t))$ by maximizing the likelihood, which is equivalent to minimizing the least-squares loss~\pcref{eq:loss_full_ident} under Gaussianity assumptions.
Hence, the gradient matching approaches can \emph{theoretically} identify the underlying parameters under~\cref{cor:full_ident}.
Formal statements and proofs for both SINDy-like and gradient matching approaches are provided in~\cref{app:proofs}. {{\it{Note that most ODE discovery approaches~\citep{wenk2019fast,brunton2016discovering,brunton2016sparse,kaheman2020sindy,kaptanoglu2021promoting,pervez2024mechanistic} refrain from making identifiability statements and explicitly states it is unknown which settings yield identifiability.}}}

\subsection{Identifiability of Dynamical Systems without Known Functional Form}

\begin{table}[t]
    \vspace{-25pt}
    \centering
    \caption{{\bf Comparing typical assumptions} of parameter estimation for dynamical systems and latent variable identification in causal representation learning. We justify that the common assumptions in both fields are aligned, providing theoretical ground for applying identifiable CRL methods to learning-based parameter estimation approaches in dynamical systems.}
    \label{tab:assums}
    \begin{tabularx}{\linewidth}{rLSl|X}
        \toprule
        \rowcolor{Gray!20 }
        \multicolumn{2}{c}{\bf param. estimation} & \multicolumn{2}{c}{\bf CRL} & \multicolumn{1}{p{\dimexpr 5.3cm\relax}}{\centering \textbf{Explanation}}  \\
        \rowcolor{Gray!20} 
        {\it ref} & {\it assumption} &  {\it assumption} & {\it ref} & \multicolumn{1}{p{\dimexpr 5.3cm\relax}}{} \\
        \midrule
        
        \ref{assmp:exist_unique} & \mkTikzCoord{eu}{\it existence \& uniqueness}    & \mkTikzCoord{co}{\it determ. gen.} &\ref{assmp:determinism} & Both \ref{assmp:exist_unique} and \ref{assmp:determinism} implies deterministic generative process. \\
        
         &                     & \mkTikzCoord{ov}{\it $supp({\thetab})=\Thetab$} & \ref{assmp:continuity_support}             &  \ref{assmp:exist_unique} implies \ref{assmp:continuity_support} as $\xb_{\thetab}$ uniquely exists for all $\thetab \in \Thetab$.  \\
        
        \ref{assmp:structural_ident}& \mkTikzCoord{tr}{\it structural identifiability}      & \mkTikzCoord{ig}{\it injectivity} &                                              \ref{assmp:injectivity}  & \ref{assmp:structural_ident} implies \ref{assmp:injectivity} of the solution $\xb_{\thetab}$. \\

        \bottomrule

    \end{tabularx}
    \tikz[overlay, remember picture,
        round/.style={circle, inner sep=0, minimum size=1mm, fill=white, draw=black, thick},
    ]{  
        \node[round] (ov_) at ($(ov.west) + (-.2, 0)$) {};
        
        \draw[thick] ($(eu.east) + (0.2, 0)$) node[round] {} to[out=0, in=180] ($(co.west) + (-0.2, 0)$) node[round] {};
        \draw[thick, dashed] ($(tr.east) + (0.2, 0)$) node[round, solid] {} to[out=0, in=180] (ov_);
        \draw[thick, dashed] ($(eu.east) + (0.2, 0)$) node[round, solid] {} to[out=0, in=180] (ov_);
        \draw[thick] ($(tr.east) + (0.2, 0)$) node[round] {} to[out=0, in=180] ($(ig.west) + (-0.2, 0)$) node[round] {};
    }
    \vspace{-10pt}
\end{table}

\looseness=-1 In traditional dynamical systems, identifiability analysis usually assumes the functional form of the ODE is known~\citep{miao2011identifiability}; however, for most real-world time series data, the functional form of underlying physical laws remains uncovered. Machine learning-based approaches for dynamical systems work in a black-box manner and can clone the behavior of an unknown system~\citep{chen2018neuralode,chen2021eventfn,norcliffe2020second}, but understanding and identifiability guarantees of the learned parameters are so far missing. Since most of the physical processes are inherently steered by a few underlying \emph{time-invaraint} parameters, identifying these parameters can be helpful in answering downstream scientific questions. For example, identifying climate zone-related parameters from sea surface temperature data could improve understanding of climate change because the impact of climate change significantly differs in polar and tropical regions.
Hence, we aim to provide identifiability analysis for the underlying parameters of an unknown dynamical system by converting the classical parameter estimation problem of dynamical systems into a latent variable identification problem in causal representation learning. 
We start by listing the common assumptions in CRL and comparing the ground assumptions between these two fields.

\begin{assumption}[Determinism]
\label{assmp:determinism}
     The data generation process is deterministic in the sense that observation $\xb$ is generated from some latent vector $\thetab$ using a deterministic solver $F$~\pcref{def:solve}.
\end{assumption}
    \begin{assumption}[Injectivity]
\label{assmp:injectivity}
     For each observation $\xb$, there is only one corresponding latent vector $\thetab$, i.e., the ODE solve function $F$~\pcref{def:solve} is injective in $\thetab$.
\end{assumption}
\begin{assumption}[Continuity and full support]
\label{assmp:continuity_support}
     \looseness=-1$p_{\thetab}$ is smooth and continuous on $\Thetab$ with $p_{\thetab} > 0$ a.e.
\end{assumption}

\begin{empheqboxed}
    \looseness=-1\textbf{Assumption justification.}~\Cref{tab:assums} summarizes common assumptions in traditional parameter estimation in dynamical systems and causal representation learning literature. 
    We observe strong alignment between the ground assumptions in these two fields that justifies our idea of employing causal representation learning methods in parameter estimation problems for dynamical systems: (1)~\Cref{assmp:exist_unique} implies that given a fixed initial value $\xb_0 \in \Xcal$, there exists a unique solution $\xb(t), \, t \in [0, t_{\max}]$ for any $f_{\thetab}$ with $\thetab \in \Thetab$. In other words, parameter domain $\Thetab$ is fully supported (\cref{assmp:continuity_support}), and these ODE solving processes from $F(\thetab)$ (\cref{def:solve}) are deterministic, which aligns with the standard~\cref{assmp:determinism} in CRL. Since the ODE solution $F(\thetab)$~\pcref{sec:probem_setup} is continuous by definition, the continuity assumption from CRL (\cref{assmp:continuity_support}) is also fulfilled. (2)~\cref{assmp:structural_ident} emphasizes that each trajectory $\xb$ can only be uniquely generated from one parameter vector $\thetab \in \Thetab$, which means the generating process $F$ (\cref{def:solve}) is injective in $\thetab$ (\cref{assmp:injectivity}).
\end{empheqboxed}

\looseness=-1 Next, we reformulate the parameter estimation problem in the language of causal representation learning. 
We first cast the generative process of the dynamical system $f_{\thetab}(\xb(t))$ as a latent variable model by considering the underlying physical parameters $\thetab \sim p_{\thetab}$ as a set of \emph{latent variables}.
Given a trajectory $\xb$ generated by a set of underlying factors $\thetab$ based on the vector field $f_{\thetab}(\xb(t))$, we consider the observed trajectory as some \emph{unknown nonlinear} mixing of the underlying $\thetab$, with the mixing process specified by individual vector field $f_{\thetab}(\xb(t))$. This interpretation of observations aligns with the standard setup of causal representation learning; for instance, high-dimensional images are usually generated from some lower-dimensional latent generating factors through an unknown nonlinear process.
Thus, estimating the parameters of unknown dynamical systems becomes equivalent to inferring the underlying generating factors in causal representation learning.

\looseness=-1 After transforming the parameter estimation into a latent variable identification problem in CRL, we can directly invoke the identifiability theory from the literature. Based on~\citet[][Theorem 1.]{locatello2019challenging}, we conclude that the underlying parameters from an unknown system are in general \textbf{\emph{non-identifiable}}. 
Nevertheless, several works proposed different weakly supervised learning strategies that can \emph{partially identify} the latent variables~\citep{locatello2020weakly,ahuja2022weakly,brehmer2022weakly,von2021self,daunhawer2023identifiability,yao2023multi}. 
To this end, we define partial identifiability in the context of dynamical systems by slightly adapting the definition of block-identifiability proposed by~\citet{von2021self}:

\begin{definition}[Partial identifiability]
\label{def:par_ident}
    A partition $\thetab_S := (\thetab_i)_{i \in S}$ with $S \subseteq [N]$ of parameter $\thetab \in \Thetab$ is partially identified by an encoder $g: \Xcal^T \to \Thetab$ if the estimator $\hat{\thetab}_S := g(\xb)_S$ contains all and only information about the ground truth partition $\thetab_S$, i.e. $\hat{\thetab}_S = h(\thetab_S)$ for some invertible mapping $h: \Thetab_S \to \Thetab_S$ where $\Thetab_S := \times_{i \in S} \Thetab_i$.
\end{definition}

\looseness=-1 Note that the inferred partition $\hat{\thetab}_S$ can be a set of \emph{entangled} latent variables rather than a single one. In the multivariate case, one can consider the $\hat{\thetab}_S$ as a bijective mixture of the ground truth parameter $\thetab_S$.


    \begin{restatable}[Identifiability without known functional form]{corollary}{unknownf}
    \label{cor:partial_id}
     Assume a dynamical system $f$ satisfying~\cref{assmp:exist_unique,assmp:structural_ident}, a pair of trajectories $\xb, \tilde{\xb}$ generated from the same system $f$ but specified by different parameters $\thetab, \tilde{\thetab}$, respectively. Assume a partition of parameters $\thetab_S$ with $S \subseteq [N]$ is shared across the pair of parameters $\thetab, \tilde{\thetab}$.
     Let $g: \Xcal^T \to \Theta$ be some smooth encoder and $\hat{F}: \Thetab \to \Xcal^T$ be some left-invertible smooth solver that minimizes the following objective:
     \begin{equation}
     \label{eq:loss_part_ident}
         \textstyle \Lcal(g, \hat{F}) = \EE_{
         \xb, \tilde\xb
         } 
         \underbrace{\norm{g(\xb)_S - g(\tilde{\xb})_S}_2^2}_{\text{\textcolor{Green}{Alignment}}}
         +\underbrace{
         \norm{\solve{g(\xb)} - \xb}_2^2
         +\norm{\solve{g(\tilde{\xb})} - \tilde{\xb}}_2^2}_{\text{\textcolor{Peach}{Sufficiency}}},
     \end{equation}
     then the shared partition $\thetab_S$ is \emph{partially identified}~\pcref{def:par_ident} by $g$ in the statistical setting.
    \end{restatable}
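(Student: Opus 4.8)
The plan is to recognize \cref{cor:partial_id} as an instance of the multi-view / content-style block-identifiability results (in the spirit of \citet{von2021self,daunhawer2023identifiability}) transported into the dynamical-systems language via the assumption dictionary established in \cref{tab:assums}. First I would set up the statistical (infinite-sample) picture: the pair $(\xb,\tilde\xb)$ is generated from $(\thetab,\tilde\thetab)\sim p_{\thetab,\tilde\thetab}$ where the marginals are $p_{\thetab}$ (full support by \cref{assmp:continuity_support}, itself implied by \cref{assmp:exist_unique}), and where by construction $\thetab_S=\tilde\thetab_S$ almost surely while the complement blocks $\thetab_{S^c},\tilde\thetab_{S^c}$ vary independently conditioned on $\thetab_S$. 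The true generative map is the solver $F=M(\thetab)$, which is injective in $\thetab$ by \cref{assmp:structural_ident} (\cref{assmp:injectivity}) and smooth; so $(F,F)$ together with $p_{\thetab,\tilde\thetab}$ is a valid element of the feasible set of \cref{eq:loss_part_ident}, achieving zero loss (alignment term vanishes because $\thetab_S=\tilde\thetab_S$; sufficiency terms vanish because $F$ reconstructs exactly). Hence any global minimizer $(g,\hat F)$ also attains zero loss, which gives two exact constraints: (a) $\hat F(g(\xb))=\xb$ and $\hat F(g(\tilde\xb))=\tilde\xb$ almost surely (perfect reconstruction), and (b) $g(\xb)_S=g(\tilde\xb)_S$ almost surely (perfect alignment).

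Next I would exploit (a) to show $g$ must be (a reparametrization of) the inverse of $F$ on the data manifold. Since $\hat F$ is left-invertible and smooth and $F$ is a smooth injection with $\hat F\circ g\circ F=\mathrm{id}$ on $\Thetab$ (composing (a) with $\xb=F(\thetab)$), the composition $h:=g\circ F:\Thetab\to\Thetab$ is a smooth injection that is left-invertible, hence a smooth bijection onto its image; after this change of variables it suffices to analyze $h$ and show that it does not mix the $S$-block with the $S^c$-block, i.e. the first $|S|$ coordinates of $h(\thetab)$ depend only on $\thetab_S$. This is where constraint (b) enters: writing $h_S$ for the $S$-block of $h$, alignment says $h_S(\thetab_S,\thetab_{S^c})=h_S(\thetab_S,\tilde\thetab_{S^c})$ for $p$-almost every triple $(\thetab_S,\thetab_{S^c},\tilde\thetab_{S^c})$ drawn from the paired distribution. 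Because $\thetab_{S^c}$ and $\tilde\thetab_{S^c}$ range over a set of full measure in $\Thetab_{S^c}\times\Thetab_{S^c}$ (full support, \cref{assmp:continuity_support}), and $h_S$ is continuous, this forces $h_S(\thetab_S,\cdot)$ to be constant in its second argument for every $\thetab_S$; i.e. $h_S$ factors as $h_S(\thetab)=\bar h(\thetab_S)$ for some continuous $\bar h:\Thetab_S\to\Thetab_S$. Finally, invertibility of $h$ on its image implies $\bar h$ is injective (two inputs differing only in $\thetab_S$ must map to different outputs, and $h_S$ is the only part that can see $\thetab_S$), so $\bar h$ is a smooth injective — hence invertible onto its image — map. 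Tracing back through $h=g\circ F$ and $\xb=F(\thetab)$ gives $g(\xb)_S=\bar h(\thetab_S)$, which is exactly \cref{def:par_ident} with the invertible $h$ there taken to be $\bar h$.

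I would organize the write-up as: (i) the feasibility/zero-loss lemma, (ii) the reconstruction $\Rightarrow$ $h=g\circ F$ is a smooth injection argument, (iii) the alignment $\Rightarrow$ $h_S$ depends only on $\thetab_S$ argument via the full-support density, and (iv) the invertibility bookkeeping. The main obstacle I expect is step (iii): upgrading the almost-everywhere equality $h_S(\thetab_S,\thetab_{S^c})=h_S(\thetab_S,\tilde\thetab_{S^c})$ to a genuine functional independence of $h_S$ from $\thetab_{S^c}$ everywhere. This needs (1) continuity of $h_S$ (from smoothness of $g$ and $F$) to pass from "a.e." to "everywhere", and (2) a careful statement of exactly what joint distribution the paired trajectories follow — in particular that, conditioned on the shared block $\thetab_S$, the two copies of the non-shared block $\thetab_{S^c},\tilde\thetab_{S^c}$ are drawn independently from a distribution supported on all of $\Thetab_{S^c}$, so that the a.e. set has full product measure. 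A secondary, more technical point is ensuring $\hat F$ being merely \emph{left-}invertible (not bijective) is enough: it is, because we only ever use $\hat F\circ g=\mathrm{id}$ on the data, never a right inverse, but I would state this carefully to avoid a gap. I would also note the standard caveat (already flagged after \cref{def:par_ident}) that $\bar h$ need not be the identity — only an invertible reparametrization of the shared block — which is all \cref{def:par_ident} demands.
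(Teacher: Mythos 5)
Your overall route coincides with the paper's: (i) the ground-truth generative process attains zero loss, so any minimizer has zero alignment and zero reconstruction error; (ii) the alignment constraint forces $g(\xb)_S$ to depend on $\thetab$ only through $\thetab_S$; (iii) the reconstruction constraint plus left-invertibility of $\hat F$ identifies $g\circ F$ with $\hat{F}^{-1}\circ F$, from which the invertible map $h$ of \cref{def:par_ident} is extracted. The only methodological difference is in (ii): you argue it directly from continuity and full support of the pairing distribution, whereas the paper imports it from \citet[Lemma~D.3]{yao2023multi}; that substitution is legitimate, and your explicit flag that one must assume the non-shared blocks are resampled with full conditional support given $\thetab_S$ is a fair point (the paper inherits this condition through the cited lemma).

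The genuine gap is in your invertibility step (iv). You justify injectivity of $\bar h$ by the parenthetical ``two inputs differing only in $\thetab_S$ must map to different outputs, and $h_S$ is the only part that can see $\thetab_S$.'' The second clause is unsupported: nothing in \cref{eq:loss_part_ident} constrains the non-shared block of the encoder, so $h_{\bar S} := (g\circ F)_{\bar S}$ may also depend on $\thetab_S$. Hence injectivity of the full map $h=g\circ F$ does not transfer to its $S$-block: if $\bar h(\thetab_S)=\bar h(\thetab_S')$ with $\thetab_S\neq\thetab_S'$, the outputs $h(\thetab_S,\thetab_{\bar S})$ and $h(\thetab_S',\thetab_{\bar S})$ can still be distinguished through their $\bar S$-coordinates, so no contradiction follows. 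The paper's Step~3 instead writes $g(\xb)=\hat{F}^{-1}\circ F(\thetab)=:h^*(\thetab)$ with $h^*:\Thetab\to\Thetab$ (claimed bijective) and only then restricts to the $S$-block after Step~2 has removed the $\thetab_{\bar S}$-dependence; surjectivity of $\bar h = h^*_S$ then follows from surjectivity of $h^*$, while injectivity requires an additional argument (of the dimension/invariance-of-domain or measure-theoretic type used in \citet{von2021self,daunhawer2023identifiability,yao2023multi}) that your parenthetical does not supply. A smaller slip: zero reconstruction gives $\hat F\circ g\circ F=F$, not $=\mathrm{id}$ on $\Thetab$; injectivity of $g\circ F$ then follows from injectivity of $F$ (\cref{assmp:structural_ident}), or by applying the left inverse of $\hat F$ to obtain $g\circ F=\hat{F}^{-1}\circ F$ as in the paper. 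So the skeleton is right, but the final invertibility bookkeeping must follow the paper's (or the cited literature's) argument rather than the one you give.
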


\looseness=-1\textbf{Discussion.}
 We remark that an implicit ODE solver $\hat{F}$ is introduced in~\cref{eq:loss_part_ident} because the functional form $f_{\thetab}$ is unknown. Intuitively, ~\Cref{cor:partial_id} provides partial identifiability results for the shared partition of parameters between two trajectories. We can consider the trajectories to be different simulation experiments but with certain sharing conditions, such as two wind simulations that share the same \emph{layer thickness} parameter. This partial identifiability statement is mainly concluded from the theory in the multiview CRL literature~\citep{ahuja2022weakly,locatello2020weakly,brehmer2022weakly,scholkopf2021toward,von2021self,daunhawer2023identifiability,yao2023multi}.
Note that this corollary is \emph{one exemplary demonstration} of achieving partial identifiability in dynamical systems. Many identifiability results from the causal representation works can be reformulated similarly by replacing their decoder with a differentiable ODE solver $\hat F$.  
 The high-level idea of multiview CRL is to identify the shared part between different views by enforcing alignment on the shared coordinates while preserving a sufficient information representation. 
 \emph{\textcolor{Green}{Alignment}} can be obtained by either minimizing the $L_2$ loss between the encoding from different views on the shared coordinates~\citep{von2021self,daunhawer2023identifiability,yao2023multi} or maximizing the correlation on the shared dimensions correspondingly~\citep{lyu2021understanding,lyu2022finite}; 
 \emph{\textcolor{Peach}{Sufficiency}} of the learned representation is often prompted by maximizing the entropy~\citep{zimmermann2021contrastive,von2021self,daunhawer2023identifiability,yao2023multi} or minimizing the reconstruction error~\citep{locatello2020weakly,ahuja2022weakly,scholkopf2021toward,brehmer2022weakly}. 
 Other types of causal representation learning works will be further discussed in~\cref{sec:related_work}.

\section{CRL-construct of Identifiable Neural Emulators for Dynamical Systems}
\label{sec:crl-guide}
\begin{figure}
    \centering
\includegraphics[width=\textwidth]{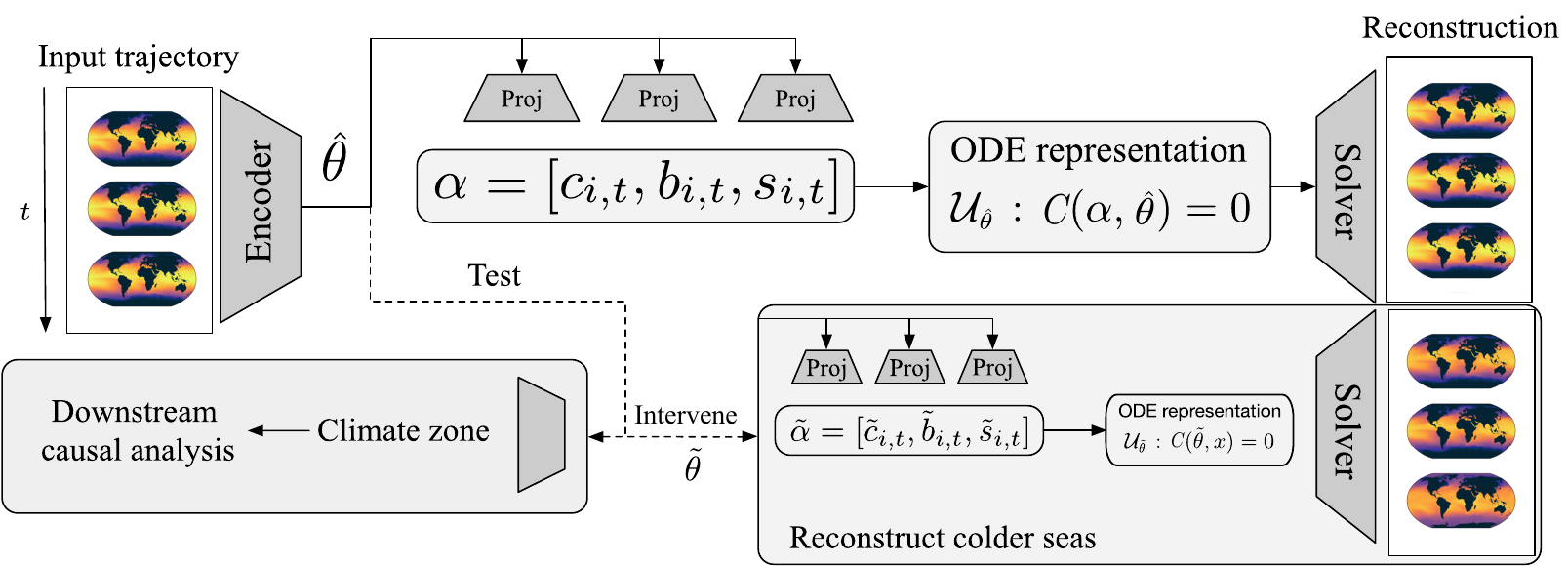}    \caption{\textbf{Model overview with sea surface temperature inputs:} Our \textbf{\emph{mechanistic identifier}} extracts the underlying time-invariant latitude-related parameters $\thetab$, providing a versatile neural emulator for downstream causal analysis.}
\label{fig:overview}
\end{figure}
\looseness=-1 This section provides a step-by-step construct of a neural emulator that can (1) identify the \emph{time-invariant, trajectory-specific} physical parameters from some unknown dynamical systems if the identifiability conditions are met and (2) efficiently forecast future time steps. Identifiability can be guaranteed by employing causal representation learning approaches~\pcref{sec:identifiability_dynamical_systems} while forecasting ability can be obtained by using an efficient mechanistic solver~\citep{pervez2024mechanistic} as a decoder. 
For the sake of simplicity, we term these identifiable neural emulators as \emph{identifiers}. 
We remark that the general architecture remains consistent for most CRL approaches, while the learning object differs slightly in \emph{latent regularization}, which is specified by individual identifiability algorithms. Intuitively, the \emph{latent regularization} can be interpreted as an additional constraint put on the learned encodings imposed by the setting-specific assumptions, such as the \emph{\textcolor{Green}{alignment}} term in multiview CRL~\pcref{cor:partial_id}. 
In the following, 
we demonstrate building an \emph{identifier} in the multiview setting from scratch and showcase how it can be easily generalized to other CRL approaches with slight adaptation. 

\looseness=-1\textbf{Architecture.} Since the parameters of interest are \emph{time-invariant} and \emph{trajectory-specific}~\pcref{sec:probem_setup}, we input the whole trajectory $\xb = (\xb(t_1), \dots, \xb(t_T))$ to a smooth encoder $g: \Xcal^T \to \Thetab$, as shown in~\cref{fig:overview}. Then, we decode the trajectory $\hat \xb$ from estimated parameter vector $\hat{\thetab} := g(\xb)$ using a mechanistic solver~\citep{pervez2024mechanistic}.
The {{high-level idea}} of mechanistic neural networks is to approximate the underlying dynamical system using a set of explicit ODEs $\Ucal_{\hat \thetab}: C(\alphab, \hat \thetab) = 0$ with learnable coefficients $\alphab \in \RR^{d_{\alphab}}$. The explicit ODE family $\Ucal_{\hat \thetab}$ can then be interpreted as a constrained optimization problem and can thus be solved using a \emph{neural relaxed linear programming solver}~\citep[Sec 3.1]{pervez2024mechanistic}.

In more detail, the original design of MNN predicts the coefficients from the input trajectory $\xb$ using an MNN encoder $g_{\mnn}$; however, as we enforce the estimated parameter $\thetab$ to preserve \emph{\textcolor{Peach}{sufficient}} information of the entire trajectory $\xb$, we instead predict the coefficients $\alphab$ from the estimated parameter $\hat{\thetab}$ with the encoder $g_{\mnn}: \Thetab \to \RR^{d_{\alphab}}$. Formally, the coefficients $\alphab$ are computed as 
$
\alphab = g_{\mnn}(\hat{\thetab})$ {where}  $\hat{\thetab} = g(\xb).
$
The resulting ODE family $\Ucal_{\hat \thetab}$ provides a broad variability of ODE parametrizations. A detailed formulation of $\Ucal_{\hat \thetab}$ at $t$~\citep[eq. (3)]{pervez2024mechanistic} is given by
\begin{equation}
\label{eq:mnn_ode}
    \underbrace{ \textstyle \sum_{i=0}^{l} c_i(t; \hat \thetab) u^{(i)} }_{\text{linear terms}}+ \underbrace{ \textstyle \sum_{j=0}^r \phi_{k}(t; \hat \thetab) g_k(t, \{u^{(j)}\})}_{\text{nonlinear terms}} = b(t; \hat \thetab),
\end{equation}
where $u^{({i})}$ is $i$-th order approximations of the ground truth state $\xb$. Like in any ODE solving in practice, solving~\cref{eq:mnn_ode} requires discretization of the continuous coefficients in time (e.g., $c_i(t; \hat \thetab)$). Discretizing the ODE representation $\Ucal_{\hat \thetab}$ gives rise to:
\begin{equation}
\textstyle
    \sum_{i=0}^{l} c_{i, t} u_t^{(i)} + \sum_{j=0}^r \phi_{k, t} g_k(\{u_t^{(j)}\}) = b_t \quad s.t. \quad 
    (u_{t_1}, u^\prime_{t_1}, \dots) = \omega,
\end{equation}
where $\omega$ denotes the initial state vector of the ODE representation $\Ucal_{\hat \thetab}$. To this end, we present the explicit definition of the learnable coefficients $\alphab:= (c_{i, t}, \phi_{k, t}, b_t, s_t, \omega)$ with $t \in \Tcal, i \in [l], k \in [r]$, which is a concatenation of linear coefficients $c_{i, t}$, nonlinear coefficients $\phi_{i, k}$, adaptive step sizes $s_t$ and initial values $\omega$. Note that we dropped the $\hat \thetab$ in the notation for simplicity, but all of these coefficients $\alphab$ are predicted from $\hat \thetab$, as described previously.
At last, MNN converts ODE solving into a constrained optimization problem by representing the $\Ucal_{\hat \thetab}$ using a set of constraints, including ODE equation constraints, initial value constraints, and smoothness constraints~\citep[Sec 3.1.1]{pervez2024mechanistic}. This optimization problem is then solved by \emph{neural relaxed linear programming} solver~\citep[Sec 3.1]{pervez2024mechanistic} in a time-parallel fashion, thus making the overall mechanistic solver scalable and GPU-friendly.

\looseness=-1\textbf{Learning objective and latent regularizers.}
Depending on whether the functional form of the underlying dynamical system is known or not, the proposed neural emulator can be trained using the losses given in~\cref{cor:full_ident} or~\cref{cor:partial_id}, respectively. 
When the functional form is unknown, we employ CRL approaches to \emph{partially} identify the physical parameters.
We remark that the causal representation learning schemes mainly differ in the latent regularizers, specified by the assumptions and settings. Therefore, we provide a more extensive summary of different causal representation learning approaches and their corresponding latent regularizer in~\cref{tab:summary_work}.

\section{Related Work}
\label{sec:related_work}
\looseness=-1\textbf{Multi-environment CRL.} Another important line of work in causal representation learning focuses on the multi-environment setup, where the data are collected from multiple different environments and thus \emph{non-identically distributed}.
Causal variable identifiability are shown under \emph{single node intervention per node} with parametric assumptions on the mixing functions~\citep{zhang2023identifiability,squires2023linear,varici2023score,ahuja2023interventional} or on the latent causal model~\citep{squires2023linear,buchholz2023learning}. These parametric assumptions can be lifted by additionally assuming \emph{paired intervention per node}, as demonstrated by~\citep{von2023nonparametric,varici2024general}.
Overall, given the fruitful literature in multi-environment causal representation learning, we believe applying multi-environments methods to build identifiable neural emulators~\pcref{sec:crl-guide} would be an exciting future avenue.

\looseness=-1 \textbf{CRL and dynamical systems.}  Recent CRL works have been tackling the parameter identification problem in dynamical systems in a parametric setting. For instance,~\citet{rajendran2024interventional} considers a Gaussian linear time invariant system with control input and~\citet{balsellsrodas2023identifiability} assumes a switching dynamical system. 
By contrast, we show identifiability in a more general setting without specific parameteric assumptions on the dynamical systems and prove 
different granularity of parameter identification under different system prior system knowledge (full identifiability if parametric form is known~\pcref{cor:full_ident} and partial identifiability when the system is unknown~\pcref{cor:partial_id}).
Another closely related line of works, temporal causal representation learning, typically assume an \emph{``intervenable"} time series in the \emph{latent space}, splitting the latent variables into two partitions, with one following the default dynamics and the other following the intervened dynamics. 
The goal of temporal CRL is to provably retrieve these \emph{time-varying} latent causal variables, such as inferring the position of a ball from raw images over time~\citep{lippe2022causal,lippe2022citris,yao2022temporally,li2024and,klindt2020towards,lachapelle2022disentanglement}.
Unlike these temporal CRL works, our approach models dynamics directly in the observational space, focusing on the \emph{time-invariant, trajectory-specific} physical parameters such as gravity or mass.
Overall, our framework addresses a different hierarchy of problems.
We believe both problems are orthogonal yet equally important, encouraging cross-pollination in future work.

\looseness=-1\textbf{ODE discovery.} The ultimate goal of ODE discovery is to learn a human-interpretable equation for an unknown system, given discretized observations generated from this system. 
Recently, many machine learning frameworks have been used for ODE discovery, such as sparse linear regression~\citep{brunton2016discovering,brunton2016sparse,rudy2017data,kaheman2020sindy}, symbolic regression~\citep{d2022deep,becker2023predicting,dascoli2024odeformer}, simulation-based inference \citep{schroder2023simultaneous,cranmer2020frontier}.
\citet{d2022deep,becker2023predicting} exploit transformer-based approaches to dynamical symbolic regression for univariate ODEs, which is extended by~\citet{dascoli2024odeformer} to multivariate case. \citet{schroder2023simultaneous} employs \emph{simulation-based variational inference} to jointly learn the operators (like addition or multiplication) and the coefficients. However, this approach typically runs simulations inside the training loop, which could introduce a tremendous computational bottleneck when the simulator is inefficient. On the contrary, our approach works offline with pre-collected data, avoiding simulating on the fly. Although ODE discovery methods can provide symbolic equations for data from an unknown trajectory, the inferred equation does not have to align with the ground truth. In other words, theoretical identifiability guarantees for these methods are still missing.

\looseness=-1\textbf{Identifiability of dynamical systems.} Identifiability of dynamical systems has been studied on a \emph{case-by-case} basis in traditional system identification literature~\citep{aastrom1971system,villaverde2016structural,miao2011identifiability}.~\citet{liang2008parameter} studied ODE identifiability under measurement error.~\citet{scholl2023uniqueness} investigated the identifiability of ODE discovery with non-parametric assumption, but only for univariate cases. More recently, several works have advanced in  identifiability analysis of \emph{linear} ODEs from a \emph{single} trajectory~\citep{qiu2022identifiability,stanhope2014identifiability,duan2020identification}. Overall, current theoretical results cannot conclude whether an unknown nonlinear ODE can be identified from observational data. Hence, in our work, we do not aim to identify the whole equation of the dynamical systems but instead focus on identifying the time-invariant parameters.

\section{Experiments}
\label{sec:exp_res}

\looseness=-1 This section provides experiments and results on both simulated and real-world climate data. We first validate full parameter identifiability~\pcref{cor:full_ident} under a wide range of known dynamical systems, as demonstrated in~\cref{subsec:ODEBench}. Next, we consider in~\cref{subsec:wind_simulation,subsec:sst} time series data governed by an unknown physical process, so we employ the multiview CRL approach together with mechanistic neural networks to build our identifiable neural emulator (termed as \emph{{mechanistic identifier}}), following the steps in~\cref{sec:crl-guide}. We compare \emph{{mechanistic identifier}} with three baselines: (1) \emph{Ada-GVAE}~\citep{locatello2020weakly}, a traditional multiview model that uses a vanilla decoder instead of a mechanistic solver. (2) \emph{Time-invariant MNN}, proposed by~\citep{pervez2024mechanistic}. We choose this variant of MNN as our baseline for a fair comparison. (3) \emph{Contrastive identifier}, a contrastive loss-based CRL approach without a decoder~\citep{von2021self,daunhawer2023identifiability, yao2023multi}.
We train \emph{mechanistic identifier} using~\cref{eq:loss_part_ident} and other baselines following the steps given in the original papers.
After training, we evaluate these methods on their identifiability and long-term forecasting capability.

\subsection{Theory Validation: ODEBench}
\label{subsec:ODEBench}
We demonstrate point-wise parameter identification results (presented as RMSE, mean $\pm$ std) over ODE system with known functional forms, including 63 dynamical systems from ODEBench~\citep{dascoli2024odeformer} and the Cart-Pole system inspired by~\citep{yao2022temporally}. Results are summarized in~\cref{tab:odebench-3d}.
For each system, we sample 100 tuples of the parameters $\theta$ within a valid range to e.g., preserve the chaotic properties. 
For each tuple, we solve the problem as outlined in~\cref{cor:full_ident}, by either regressing the estimated observation $F(\hat{\thetab})$ onto the true observation $\xb$, or equivalently, regressing the estimated vector field $f_{\thetab}(\xb)$ onto the derivatives $\dot \xb$, given the same initial conditions. Note the data derivatives $\dot \xb$ can be obtained from numerical approximation in case it is not given a priori. The resulting root-mean-square deviation (RMSE) is calculated and averaged across the parameter dimensions. We report the mean and standard deviation of these averaged RMSEs over the 100 independent runs. From~\cref{tab:odebench-3d}, we observe highly accurate point estimates for all stationary system parameters $\theta$, thereby validating~\cref{cor:full_ident} across various experimental settings.

\subsection{Wind Simulation}
\label{subsec:wind_simulation}
\begin{wrapfigure}{r}{.33\textwidth}
    \vspace{-45pt}
    \includegraphics[width=\linewidth]{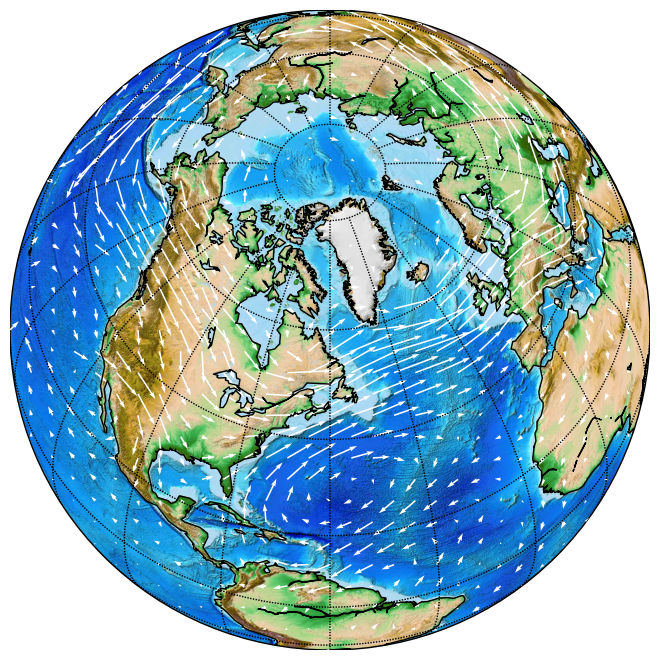}
    \caption{\textbf{Wind simulation}: $u, v$ components [m/s] of simulated air motion over the globe.}
    \label{fig:wind_input_arrows}
    \vspace{-22pt}
\end{wrapfigure}

\looseness=-1\textbf{Experimental setup.}
Our experiment considers longitudinal and latitudinal wind velocities (also termed $u, v$ wind components) from the global wind simulation data generated by various \emph{layer-thickness} parameters. 
~\Cref{fig:wind_input_arrows} depicts the wind simulation output at a certain time point.
To train the multiview approaches,
we generate a tuple of three views: After sampling the first view $\xb^1$ randomly throughout the whole training set, we sample another trajectory $\xb^2$ from a different location which shares the same simulation condition as the first one, compared to the first view, the third view $\xb^3$ is then sampled from another simulation but at the same location. 
Overall, $\xb^1, \xb^2$ share the global simulation conditions like the \emph{layer thickness} parameter while $\xb^1, \xb^3$ only share the local features. All three views share global atmosphere-related features that are not specified as simulation conditions. 
More details about the data generation process and training pipeline are provided in~\cref{app:sst_v2}.

\looseness=-1\textbf{Parameter identification.}
In this experiment, we use the learned representation to classify the ground-truth labels generated by discretizing the generating factor \emph{layer thickness}, and report the accuracy in~\cref{fig:wind_simulation_acc_heatmap}. In more detail, we use \texttt{latent dim}=12 for all models and split the learned encodings into three partitions $S_1, S_2, S_3$, with four dimensions each. Then, we individually predict the ground truth \emph{layer thickness} labels from each partition. According to the previously mentioned view-generating process, the \emph{layer thickness} parameter should be encoded in $S_1$ for both \emph{contrastive} and \emph{mechanistic identifiers}. This hypothesis is verified by~\cref{fig:wind_simulation_acc_heatmap} since both \emph{contrastive} and \emph{mechanistic identifiers} show a high accuracy of \texttt{acc}$\approx$1 in the first partition $S_1$ and low accuracy in other partitions. On the contrary, \emph{Ada-GVAE} and \emph{TI-MNN} performed significantly worse with an average acc. of 60\% everywhere. Overall,~\cref{fig:wind_simulation_acc_heatmap} shows both the necessity of explicit time modeling using MNN solver (compared to \emph{Ada-GVAE}) and identifiability power of multiview CRL (compared to \emph{TI-MNN}).

\begin{table}[t]
\vspace{-20pt}
\begin{minipage}[t]{.39\textwidth}
    \centering
    \includegraphics[height=.65\linewidth]{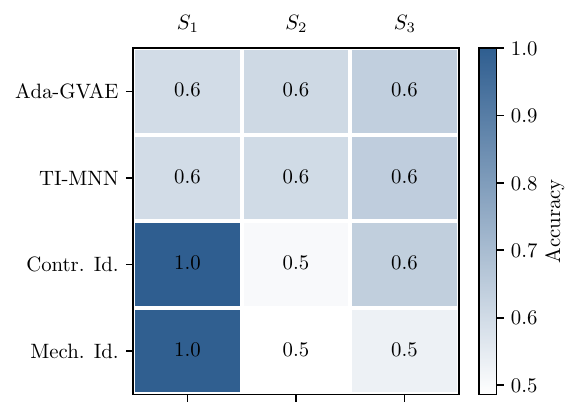}
    \captionof{figure}{\looseness=-1\textbf{Prediction accuracy on \emph{layer thickness} parameter on wind simulation data}, evaluated on individual encoding partitions $S_1, S_2, S_3$. Results averaged from three random runs.}
    \label{fig:wind_simulation_acc_heatmap}
\end{minipage}
\hfill
\begin{minipage}[b]{.58\textwidth}
    \centering
    \caption{\looseness=-1\textbf{Performance evaluation on the \textsc{SST-V2} data on various types of tasks}. Results averaged over three random seeds with standard deviation, provided as (m $\pm$ std).}
    \label{tab:summary_res}
    \scriptsize
    \renewcommand{\arraystretch}{2}%
    \resizebox{\columnwidth}{!}{%
    \begin{tabular}{l c c c}
    \toprule
    \rowcolor{Gray!20}  & \multicolumn{3}{c}{\textsc{SST V2}}\\
    \rowcolor{Gray!20} 
       & \textbf{Acc.}(ID)$(\uparrow)$ & \textbf{Acc.}(OOD)$(\uparrow)$ & \textbf{Forecast. error}$(\downarrow)$\\
    \midrule
      Ada-GVAE  &$0.468 \pm 0.001$ & $0.467 \pm 0.000$ &  $0.043 \pm 0.044$\\
      TI-MNN &  $0.697 \pm 0.049$ & $0.668 \pm 0.074$ & $0.024 \pm 0.016$\\
      Contr. Identifier  & $\mathbf{0.904} \pm 0.011$ & $\mathbf{0.861} \pm 0.022$ & \xmark \\
      \textbf{Mech. Identifier} & $\mathbf{0.902} \pm 0.005$ & $0.824 \pm 0.016$ & $\mathbf{0.007} \pm 0.003$ \\
    \bottomrule
    \end{tabular}
}
\end{minipage}
\vspace{-17pt}
\end{table}


\subsection{Real-world Sea Surface Temperature}
\label{subsec:sst}

\looseness=-1\textbf{Experimental setup.}
We evaluate the models on sea surface temperature dataset \emph{SST-V2}~\citep{huang2021improvements}.
For the multiview training, we generate a pair trajectories from a small neighbor region ($\pm 5^{\circ}$) along the \textbf{\emph{same latitude}}. We believe these pairs share certain climate properties as the locations from the same latitude share \emph{roughly} the amount of direct sunlight which will directly affect the sea surface temperature. Further infromation about the dataset and training procedure is provided in~\cref{app:sst_v2}.

\looseness=-1\textbf{Time series forecasting.} 
We chunk the time series into slices of 4 years in training while keeping last four years as out-of-distribution forecasting task. To predict the last chunk, we input data from 2015 to 2018 to get the learned representation $\hat{\thetab}$. Since we assume $\hat{\thetab}$ to be \emph{time-inavriant}, we decode $\hat{\thetab}$ together with 10 initial steps of 2019 to predict the last chunk. Note that \emph{contrastive identifier} is excluded from this task as it does not have a decoder.
As shown in~\cref{tab:summary_res}, the forecasting performance of \emph{mechanistic Identifier} surpasses \emph{Ada-GVAE} by a great margin, showcasing the superiority of integrating scalable mechanistic solvers in real-world time series datasets. At the same time, \emph{TI-MNN} performed worse and unstably despite the MNN component, verifying the need of the additional information bottleneck (parameter encoder $g$) and the multiview learning scheme.

\looseness=-1\textbf{Climate-zone classification.}
Since there is no ground truth latitude-related parameters available, we design a downstream classification task that verifies our learned representation encodes the latitude-related information. 
The goal of the task is to predict the climate zone \emph{(tropical, temperate, polar)} from the learned \emph{shared} representation because the latitude uniquely defines climate zones. 
We evaluated the methods in both \emph{in-distribution} (ID) and \emph{out-of-distribution} (OOD) setup for all baselines. In the OOD setting, we input data from longitude $10^{\circ}$ to longitude $360^{\circ}$ when training the classifier while keeping the first $10$ degree as our out-of-distribution test data. 
~\Cref{tab:summary_res} show that both \emph{contrastive} and \emph{mechanistic identifiers} perform decently, supporting the applicability of identifiable multiview CRL algorithms in dynamical systems. Overall, the performance of multiview CRL-based approaches (\emph{contrastive and mechanistic identifiers}) far exceeds \emph{Ada-GVAE} and \emph{TI-MNN}, again showcasing the superiority of the combination of causal representation learning and mechanistic solvers.

\begin{figure}
    \centering
    \begin{subfigure}[t]{.35\textwidth}
    \centering
    \includegraphics[width=\linewidth]{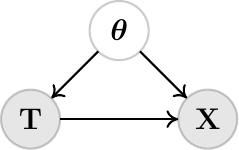}
\end{subfigure}
\hspace{30pt}
    \begin{subfigure}[t]{.5\textwidth}
    \centering
    \includegraphics[width=\linewidth]{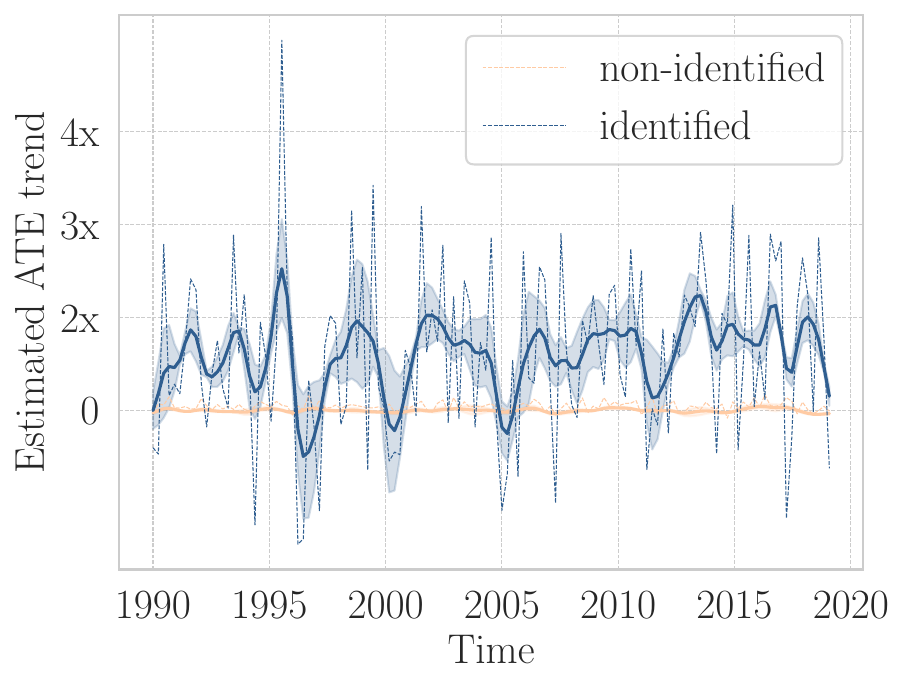}
\end{subfigure}
\caption{\looseness=-1 \emph{Left}: {Underlying causal model for \textsc{SST-V2} data, $\bm{\theta}$: covariates (latitude-related parameters of interest), $\mathbf{X}$: outcome (zonal average temperature), $\mathbf{T}$: treatment (tropical $\mathbf{T}=0$ or polar $\mathbf{T}=1$).} \emph{Right:} {Comparison on ATE change ratio between \textcolor{SmokeBlue}{identified} and \textcolor{Peach}{non-identified} parameters, computed by $\nicefrac{ATE(year) - ATE(1990)}{ATE(1990)}$, averaged over three runs.}}
\label{fig:climate_zone_ate}
\vspace{-10pt}
\end{figure}

\looseness=-1\textbf{Average treatment effect estimation.}
We further investigate the effect of climate zone on average temperature along one specific latitude through \emph{average treatment effect} (ATE) estimation. Formally, we consider the latitudinal average temperature as outcome $Y$, two climate zones (\emph{tropical} $(T=0)$, \emph{polar}$(T=1)$) as binary treatments, and the predicted latitude-specific features as unobserved mediators.
Formally, ATE is defined as: $\text{ATE}:=\mathbb{E}[Y|do(T=1)]-\mathbb{E}[Y|do(T=0)]$.
Since ATE cannot be computed directly~\citep{holland1986statistics}, we estimate it using the popular \emph{AIPW} estimator~\citep{robins1994estimation}.
\Cref{fig:climate_zone_ate} illustrates that the estimated ATE from the \textcolor{Peach}{non-identified} representation lacks a discernible pattern~\citep{rantanen2022arctic}
whereas the \textcolor{SmokeBlue}{identified} representation exhibits a noisy yet clear increasing trend, indicating the global warming effect. 
This is because the non-identified representation failed to isolate the covariates $\theta$, leading to biased treatment effect estimates. To estimate treatment effects, the covariates (i.e., the latitude-related parameters we identify) must not be influenced by the treatment (i.e., the climate zones). Otherwise, they become confounders, leading to incorrect estimates~\citep{feuerriegel2024causal}. 


\section{Limitations and Conclusion}
\label{sec:conclusion}
In this paper, we build a bridge between causal representation learning and dynamical system identification. By virtue of this connection, we successfully equipped existing mechanistic models (focusing on~\citep{pervez2024mechanistic} in practice for scalability reasons) with identification guarantees. Our analysis covers a large number of papers, including~\citep{wenk2019fast,brunton2016discovering,brunton2016sparse,kaheman2020sindy,kaptanoglu2021promoting,pervez2024mechanistic} explicitly refraining from making identifiability statements. At the same time, our work demonstrated that causal representation learning training constructs are ready to be applied in the real world, and the connection with dynamical systems offers untapped potential due to its relevance in the sciences. This was an overwhelmingly acknowledged limitation of the causal representation learning field~\citep{locatello2020weakly,von2021self,daunhawer2023identifiability,yao2023multi,ahuja2023interventional,buchholz2023learning,varici2023score,squires2023linear}. Having clearly demonstrated the mutual benefit of this connection, we hope that future work will scale up identifiable mechanistic models and apply them to even more complex dynamical systems and real scientific questions.
Nevertheless, this paper has several technical limitations that could be addressed in future work. First of all, the proposed theory explicitly requires \emph{determinism} as one of the key assumptions~\pcref{assmp:determinism}, which directly excludes another important type of differential equation: Stochastic Differential Equations. Second, we assume we directly observe the state $\xb$ without considering measurement noise. Although the empirical results were promising on real-world noisy data~\pcref{subsec:sst}, we believe explicitly modeling measurement noise would elevate the theory. Finally, our identifiability analysis focuses on the infinite data regime, which is unrealistic in real-world scenarios. 

\section*{Acknowledgments} 
We thank Niklas Boers for recommending the SpeedyWeather simulator and Valentino Maiorca for guidance on Fourier transformation for SST data. We are also grateful to Shimeng Huang and Riccardo Cadei for their feedback on the treatment effect estimation experiment and to Jiale Chen and Adeel Pervez for their assistance with the solver implementation. Finally, we appreciate the anonymous reviewers for their insightful suggestions, which helped improve the manuscript.

\bibliographystyle{plainnat}
\bibliography{refs}

\clearpage
\appendix
\addcontentsline{toc}{section}{Appendix} 
\part{Appendix} 
{
  \hypersetup{linkcolor=SmokeBlue}
  \parttoc
}
\makenomenclature
\renewcommand{\nomname}{} 
\section{Notation and Terminology}
\label{sec:notations}
\vspace{-2em}
\nomenclature[1]{\(f\)}{Vector field}
\nomenclature[2]{\(\thetab\)}{Time-invariant parameters for ODE $f$}
\nomenclature[2]{\(\Thetab\)}{Parameter domain}
\nomenclature[2]{\(N\)}{Dimensionality of parameter $\thetab$}
\nomenclature[3]{\(M\)}{Function that maps from $\thetab$ to $f_{\thetab}$}
\nomenclature[4]{\(t_{\max}\)}{End of the time span}
\nomenclature[3]{\(T\)}{Number of time steps}
\nomenclature[6]{\(\Tcal\)}{Discretized time grid of size T}
\nomenclature[7]{\(\xb(t)\)}{ODE solution at time $t$}
\nomenclature[8]{\(\xb\)}{ODE solution for the time grid $\Tcal$}
\nomenclature[9]{\(\Xcal\)}{State-space domain}
\nomenclature[9]{\(\Xcal^T\)}{Trajectory domain over time grid $\Tcal$}

\printnomenclature

\section{Proofs}
\label{app:proofs}
\subsection{Proofs for full identifiability}
\label{app:proof_full_ident}
\fullIdent*
\begin{proof}
    We begin by showing the global minimum of $\Lcal(\hat{\thetab})$ exists and equals zero. Then, we show by contradiction that any estimators $\hat{\thetab}$ that obtains this global minimum has to equal the ground truth parameters $\thetab$.
    
    \textbf{Step 1.} We show that the global minimum zero can be obtained for $\Lcal(\hat{\thetab})$.
    Consider the ground truth parameter $\thetab \in \Theta$, then by definition of the ODE solver $F$~\pcref{def:solve}, we have:
    \begin{equation}
            \Lcal(\thetab) = \norm{ F(\thetab) - \xb}_2^2 = \norm{\xb - \xb}_2^2 = 0.
    \end{equation}

    \textbf{Step 2.}  Suppose for a contraction that there exists a $\thetab^* \in \Theta$ that minimizes the loss~\cref{eq:loss_full_ident} but differs from the ground truth parameters $\thetab$, i.e., $\thetab^* \neq \thetab$. This implies:
    \begin{equation}
        \Lcal(\thetab^*) = \norm{ F(\thetab^*) - \xb}_2^2 = 0
    \end{equation} 
    Note that $\Lcal(\thetab^*)$ can be rewritten as:
    \begin{equation}
        \Lcal(\thetab^*) = \sum_{k=1}^T \norm{ F(\thetab^*)_{t_k} - \xb(t_k)}_2^2 = 0
    \end{equation}
    To make sure the sum is zero, each individual term has to be zero, that is $F(\thetab^*)_{t_k} = \xb(t_k), \forall t \in \{t_1, \dots, t_T\}$. According to the uniqueness assumption of the ODE~\pcref{assmp:exist_unique}, this implies $\thetab^* = \thetab$, which leads to a contradiction.

    Thus, we have shown that minimizing~\cref{eq:loss_full_ident} will yield the ground truth parameter $\thetab$. In other words, any estimator $\hat \thetab$ that minimizes~\cref{eq:loss_full_ident} fully identifies $\thetab$.
\end{proof}

\textbf{Full identifiability with closed form solution when $f_{\thetab}$ is linear in $\thetab$}. We show that a closed-form solution can be obtained through linear least squares when the vector field $f_{\thetab}$ is linear in $\thetab$ and if we observe a \emph{first-order} trajectory. A \emph{first-order} trajectory means the first-order derivatives are included in the state-space vector. This statement is formalized as follows: 

\begin{observation}
\label{full_ident:linear}
    Given a first-order trajectory $(\xb, \dot{\xb}) = (\xb(t), \dot{\xb}(t))_{t \in \Tcal}$ generated from a dynamical system $f_{\thetab}(\xb(t))$ satisfying~\cref{assmp:exist_unique,assmp:structural_ident}.
    In particular, this ODE $f_{\thetab}$ can be written as a weighted sum of a set of base functions $\{\phi_{1}, \dots, \phi_m\}$, i.e., $f_{\thetab}$ is linear in $\thetab$:
    \begin{equation}
        \textstyle f_{\thetab}(\xb(t)) = \sum_{i = 1}^m \theta_i \phi_i(\xb).
    \end{equation}
    Define $\Phi_{\xb} := [\phi_i(\xb(t))]_{i \in [m], t \in \Tcal} \in \RR^{m \times T}$, then the global optimum of the loss~\cref{eq:loss_full_ident} is given by
    \begin{equation}
    \label{eq:recon_err}
        \thetab^* = \left(\Phi_{\xb}^\intercal \Phi_{\xb}\right)^{-1} \phi_{\xb} \dot \xb
    \end{equation}
\end{observation}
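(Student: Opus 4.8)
The plan is to recognize that when $f_{\thetab}$ is linear in $\thetab$, the reconstruction-style objective from \cref{cor:full_ident} reduces to an ordinary linear least-squares problem in the parameter vector $\thetab$, for which the normal equations give a closed form. The key observation is that for a first-order trajectory, the ODE relation $\dot{\xb}(t) = f_{\thetab}(\xb(t)) = \sum_{i=1}^m \theta_i \phi_i(\xb(t))$ holds pointwise in $t$, so matching the solved trajectory to the data is equivalent (via the equivalence noted in the discussion after \cref{cor:full_ident}, and used in \cref{subsec:ODEBench}) to matching the predicted vector field to the observed derivatives $\dot{\xb}$. Concretely, I would first rewrite the loss as a sum over the time grid $\Tcal$ of squared residuals $\norm{\dot{\xb}(t) - \sum_{i=1}^m \theta_i \phi_i(\xb(t))}_2^2$, then stack these residuals across $t \in \Tcal$ into a single vector equation. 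With $\Phi_{\xb} = [\phi_i(\xb(t))]_{i \in [m], t \in \Tcal} \in \RR^{m \times T}$, the stacked residual is (up to arranging dimensions) $\dot{\xb} - \Phi_{\xb}^\intercal \thetab$, so the objective becomes $\norm{\dot{\xb} - \Phi_{\xb}^\intercal \thetab}_2^2$, a standard least-squares form.

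Next, I would differentiate this quadratic in $\thetab$ and set the gradient to zero, obtaining the normal equations $\Phi_{\xb}\Phi_{\xb}^\intercal \thetab = \Phi_{\xb}\dot{\xb}$, whence $\thetab^* = (\Phi_{\xb}\Phi_{\xb}^\intercal)^{-1}\Phi_{\xb}\dot{\xb}$, matching the claimed expression \pcref{eq:recon_err}. Invertibility of the Gram matrix $\Phi_{\xb}\Phi_{\xb}^\intercal$ needs to be justified: this follows from structural identifiability (\cref{assmp:structural_ident}), since if $\Phi_{\xb}\Phi_{\xb}^\intercal$ were singular there would be a nonzero $\vb$ with $\Phi_{\xb}^\intercal \vb = 0$, meaning $\sum_i v_i \phi_i(\xb(t)) = 0$ for all $t \in \Tcal$; combined with an argument that the library functions evaluated along the trajectory span enough directions (a generic-position / richness condition on $\Tcal$), this would contradict uniqueness of the parameter representation. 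Finally, I would note that convexity of the quadratic objective guarantees this stationary point is the global minimum, and that it coincides with the ground-truth $\thetab$ by \cref{cor:full_ident}, so the closed form indeed recovers the true parameters.

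The main obstacle I anticipate is being careful about the multivariate bookkeeping: $\xb(t) \in \RR^d$ so $\dot{\xb}(t)$ and each $\phi_i(\xb(t))$ are vectors, and the least-squares problem is really a matrix regression unless $d=1$ or one vectorizes appropriately; the paper's notation in \pcref{eq:recon_err} (with a lowercase $\phi_{\xb}$ that appears to be a typo for $\Phi_{\xb}$, and an implicit transpose convention) glosses over this, so the proof should fix a clear convention (e.g. treat each output coordinate separately, or vectorize with a Kronecker structure) and verify the normal equations still collapse to the stated form. A secondary subtlety is stating precisely the richness condition on the time grid $\Tcal$ under which $\Phi_{\xb}$ has full row rank $m$ — this is where structural identifiability is actually invoked, and it deserves an explicit sentence rather than being swept into "satisfying \cref{assmp:structural_ident}."
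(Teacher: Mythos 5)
Your proposal is correct and follows exactly the route the paper intends: the paper states this Observation without an explicit proof, treating it as the standard normal-equations solution of the linear least-squares regression of $\dot{\xb}$ onto the library matrix evaluated along the trajectory, which is precisely what you derive (and your reading of the lowercase $\phi_{\xb}$ as a typo for $\Phi_{\xb}$, with the transpose convention fixed so the formula is the usual $(\Phi^\intercal\Phi)^{-1}\Phi^\intercal\dot{\xb}$, is the intended one), with the identification of this minimizer with the ground truth following from the exact pointwise ODE relation and \cref{cor:full_ident}. Your additional caveat is also well taken: invertibility of the Gram matrix on the finite grid $\Tcal$ is a genuine extra rank/richness requirement that \cref{assmp:structural_ident} alone (a statement about the continuous trajectory) does not supply, and the paper's statement indeed glosses over both this and the multivariate ($d>1$) vectorization bookkeeping you mention.
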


As a direct implication, SINDy-like approaches~\citep{brunton2016sparse,brunton2016discovering,lu2022discovering} and gradient matching~\citep{wenk2019fast} can fully identify the underlying physical parameters $\thetab$ even with a closed-form solution if the underlying vector field $f_{\thetab}$ is can be represented as a sparse weighted sum of the given base functions $\{\phi_i\}_{i\in [m]}$. 

\subsection{Proofs for partial identifiability}
\unknownf*
\begin{proof}
    This proof can be directly adapted from the proofs with by~\citet{von2021self, daunhawer2023identifiability, yao2023multi} with slight modification. So we briefly summarize the \textbf{Step 1.} and \textbf{Step 2.} that are imported from previous work and focus on the modification (\textbf{Step 3.}).
    
    \textbf{Step 1.} We show that the loss function~\cref{eq:loss_part_ident} is lower bounded by zero and construct optimal encoder $g^*: \Xcal^T \to \Thetab$ that reach this lower bound.
    Define $g^* : \Xcal^T \to \Thetab := F^{-1}$ as the inverse of the ground truth data generating process, i.e., for all trajectories $\xb = F(\thetab)$ that generated from parameter $\thetab$, it holds:
    \begin{equation}
        g^*(\xb) = \thetab
    \end{equation}
    Thus, we have shown that the global minimum \emph{zero} exists and can be obtained by the inverse mixing function $F^{-1}: \Xcal^T \to \Thetab$~\pcref{def:solve}.

    \textbf{Step 2.}
    We show that any optimal encoders $g$ that minimizes~\cref{eq:loss_part_ident} must have the \textcolor{Green}{\emph{alignment}} equal zero, in other words, it has to
    satisfy the \textbf{\emph{invariance}} condition, which is formalized as
    \begin{equation}
        g(\xb)_S = g(\tilde{\xb}) \qquad a.s.
    \end{equation}
    Following~\citet[Lemma D.3]{yao2023multi}, we conclude that both $g(\xb)_S$ and $g(\tilde \xb)_S$ can only depend on information about the shared partition about the ground truth parameter $\thetab_S$. In other words, 
    \begin{equation}
        g(\xb)_S = g(\tilde \xb)_S = h(\thetab_S)
    \end{equation}
    for some smooth $h: \Theta_S \to \Theta_S$.

    \textbf{Step 3.} At last, we show that $h$ is invertible. Note that any optimal encoders $g$ that minimizes~\cref{eq:loss_part_ident} must have zero reconstruction error on both $\xb$ and $\tilde \xb$.
    Taking $\xb$ as an example, we have
    \begin{equation}
        \EE \norm{\solve{g(\xb)} - \xb}_2^2 = 0 
    \end{equation}
    which implies
    \begin{equation}
        {\solve{g(\xb)} = \xb} \qquad a.s. 
    \end{equation}
    If two continuous functions $\solve{g(\xb)}$ and $\xb$ equals \emph{almost} everywhere on $\Thetab$, then they are equal everywhere on $\Thetab$, which implies:
    \begin{equation}
        \solve{g(\xb)} = \xb \qquad \forall \thetab \in \Thetab
    \end{equation}
    Substituting $\xb$ with the ground truth generating process $F$:
    \begin{equation}
        \solve{g(\xb)} = F(\thetab) \qquad \forall \thetab \in \Thetab,
    \end{equation}
    applying the left inverse of $\hat{F}$, we have:
    \begin{equation}
        \hat{F}^{-1} \circ \solve{g(\xb)} = \hat{F}^{-1} \circ F(\thetab) \qquad \forall \thetab \in \Thetab,
    \end{equation}
    i.e.,
    \begin{equation}
        g(\xb) = \hat{F}^{-1} \circ F(\thetab) = \hat{F}^{-1} \circ F(\thetab_S, \thetab_{\bar S}) \qquad \forall \thetab \in \Thetab,
    \end{equation}
    Define $h^*:= \hat{F}^{-1} \circ F$ , note that $h^*$ is bijective as a composition of bijections. Imposing the \textbf{\emph{invariance}} constraint, we have $g(\xb)_S = h^*(\thetab_S, \thetab_{\bar S})_S$. Since $g(\xb)_S$ cannot depend on $\thetab_{\bar S}$ as shown in \textbf{Step 2}, we have $g(\xb)_S = h^*_S(\thetab_S)$ with $h:= h^*_S: \Thetab_S \to \Thetab_S$. 
     
    Thus we have shown that $g(\xb)_S$ \emph{partially} identifies $\thetab_S$.

\end{proof}

 \section{Experimental results}
 \label{app:exp}

\textbf{General remarks.} All models used in the experiments~\pcref{sec:exp_res} (\emph{Ada-GVAE, TI-MNN, contrastive identifier, mechanistic identifier}) were built upon open-sourced code provided by the original works~\citep{locatello2020weakly,pervez2024mechanistic,yao2023multi}, under the MIT license. For \emph{mechanistic identifiers}, we add a regularizer multiplier on the \textcolor{Green}{\emph{alignment}} constraint~\pcref{def:par_ident}, which is shown in~\cref{tab:train_params_wind_simulation,tab:train_params_sst}.

 \begin{figure}[t]
    \centering
\includegraphics[width=\textwidth]{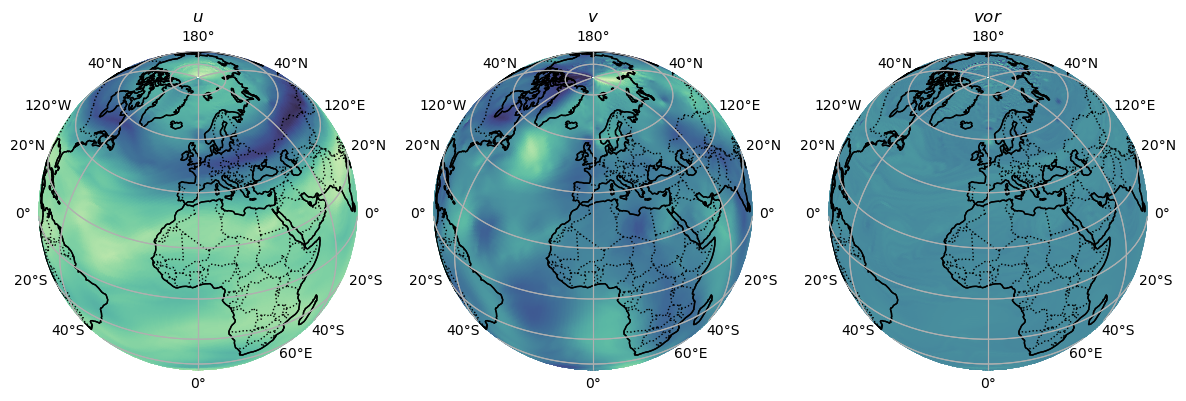}
    \caption{\looseness=-1\textbf{Example of wind simulation}: \emph{Left:} longitudinal wind velocity ($u$) [m/s]. \emph{Middle}: latitudinal wind velocity $(v)$[m/s], \emph{Right}: relative vorticity ($vor$) [1/s].}
    \label{fig:wind_input}
\end{figure}

 \begin{table}
    \centering
    \begin{minipage}[t]{\textwidth}
    \centering
        \caption{Training setup for wind simulation in~\cref{subsec:wind_simulation}. Non-applicable fields are marked with \xmark.}
        \label{tab:train_params_wind_simulation}
        \begin{tabular}{lllll}
        \toprule
         \rowcolor{Gray!20} & \textbf{Ada-GVAE} & \textbf{TI-MNN} & \textbf{Cont. Identifier} & \textbf{Mech. Identifier}\\
         \midrule
          Pre-process & DCT & DCT & DCT & DCT\\
          Encoder & {6-layer MLP} & {6-layer MLP} & {6-layer MLP} & {6-layer MLP}\\
          Decoder & {6-layer MLP}  & {6-layer MLP}  & \xmark & 3 proj. $\times$  6-layer MLP\\
          Time dim & 121 & 121 & 121 & 121\\
          State dim & 2 & 2  & 2 & 2 \\
          Hidden dim & 1024  & 1024 & 1024 & 1024 \\
          Latent dim & 12  & 12 & 12 & 12 \\
          Optimizer & {Adam} & {Adam} & {Adam} & {Adam}\\
          Adam: learning rate& {$\expnumber{1}{-5}$} & {$\expnumber{1}{-5}$} & {$\expnumber{1}{-5}$} & {$\expnumber{1}{-5}$}\\
          Adam: beta1 & {0.9} &{0.9} & {0.9}  & {0.9}\\
          Adam: beta2 & {0.999} & {0.999} &{0.999} & {0.999}\\
          Adam: epsilon & {$\expnumber{1}{-8}$} &{$\expnumber{1}{-8}$} & {$\expnumber{1}{-8}$} & {$\expnumber{1}{-8}$}\\
          Batch size& {1128}& {1128} & {1128} & {1128}\\
          Temperature $\tau$ & \xmark & \xmark &0.1 & \xmark\\
          Alignment reg. & \xmark & \xmark & \xmark & 10\\
          \# Initial values & $10$ & $10$ & \xmark & $10$\\
          \# Iterations &{< 30,000}&{< 30,000} & {< 30,000} & {< 30,000}\\
          \# Seeds & {3} & {3} & 3 & 3\\
          \bottomrule
        \end{tabular}
    \end{minipage}
\end{table}

\begin{table}[t]
    \caption{\textbf{Wind simulation}: output variables.}
    \label{tab:wind_simulation_outputs}
    \centering
    \begin{tabular}{ll}
    \toprule
      \rowcolor{Gray!20} Output variable [unit] & Shape \\
      \midrule
       Longitudinal wind velocity ($u$) [m/s] & (\texttt{ts}, \texttt{lev}, \texttt{lat}, \texttt{lon})\\
       Latitudinal wind velocity ($v$) [m/s] & (\texttt{ts}, \texttt{lev}, \texttt{lat}, \texttt{lon}) \\
       Relative vorticity ($vor$) [1/s] & (\texttt{ts}, \texttt{lev}, \texttt{lat}, \texttt{lon})\\
       \bottomrule
    \end{tabular}
\end{table}

\subsection{Wind simulation: \texttt{SpeedyWeather.jl}}
 \label{app:wind_simulation}

We simulate global air motion using using the \texttt{ShallowWaterModel} from \texttt{speedy weather} Julia package~\citep{speedy2023klower}. We consider a \emph{layer thickness} as the primary generating factor in \texttt{ShallowWaterModel} varying from $\expnumber{8}{3}$[m] to $\expnumber{2}{4}$[m], which is a reasonable range given by the climate science literature. 
Taking the minimal and maximal values, we simulate the wind in a binary fashion and obtain 9024 trajectories across the globe under different conditions.
Each trajectory constitutes three output variables discretized on \texttt{ts}=121 time steps, on a 3D resolution grid of size: latitude \texttt{lat}=47;  longitude \texttt{lon}=96; level \texttt{lev}=1. 
The three output variables represent \emph{u wind component} (parallel to longitude), \emph{v wind component} (parallel to latitude), and \emph{relative vorticity}, respectively. 
An illustrative example of all three components is depicted in~\cref{fig:wind_input}.
further details about the simulation output are provided in~\cref{tab:wind_simulation_outputs}.
In particular, to train more efficiently, we pre-process the data using a \emph{discrete cosine transform} (DCT) proposed by~\citet{ahmed1974discrete} and only keep the first $50\%$ frequencies. This is feasible as the original data possesses a certain periodic pattern, as shown in~\cref{fig:recon_sw}.

For all baselines, we train the model till convergence. More training and test details for the tasks in~\cref{subsec:wind_simulation} are summarized in~\cref{tab:train_params_wind_simulation}. To validate identifiability, we use \texttt{LogisticRegression} model from \texttt{scikit-learn} in its default setting to evaluate the classification accuracy in~\cref{fig:wind_simulation_acc_heatmap}.

\begin{figure}
    \centering
    \includegraphics[width=\textwidth]{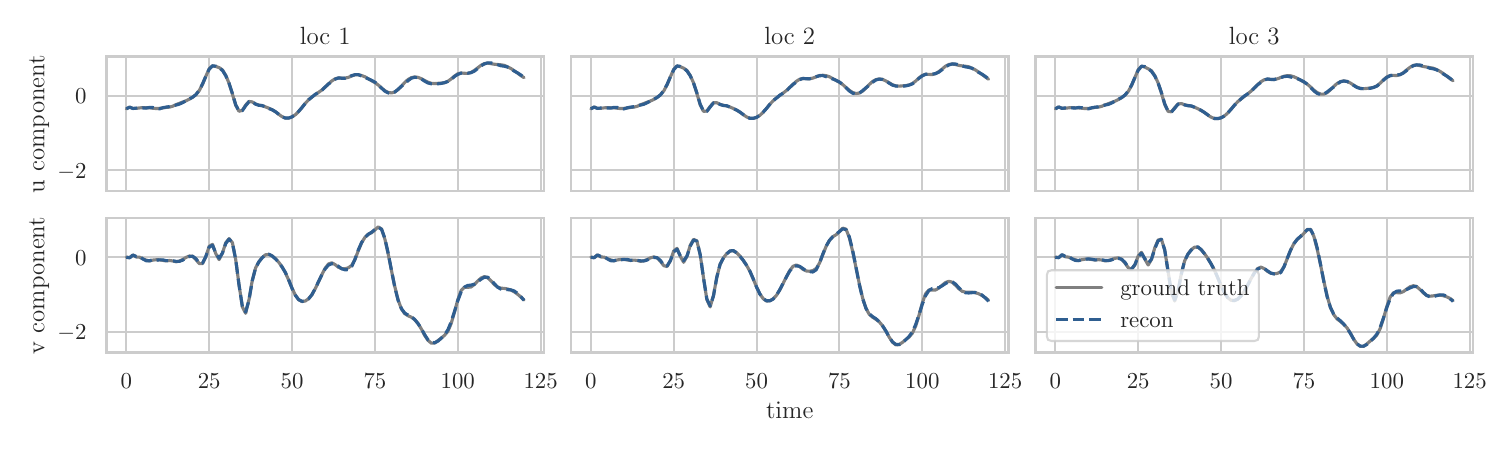}
    \caption{\textbf{Wind simulation}: \emph{mechanistic identifier} reconstruction of highly irregular time series. The first half of the trajectory is provided as initial values, while the second half is predicted.}
    \label{fig:recon_sw}
\end{figure}


\subsection{Sea surface temperature: \textsc{SST-V2}}
 \label{app:sst_v2}

\begin{figure}
    \centering
    \includegraphics[width=\textwidth]{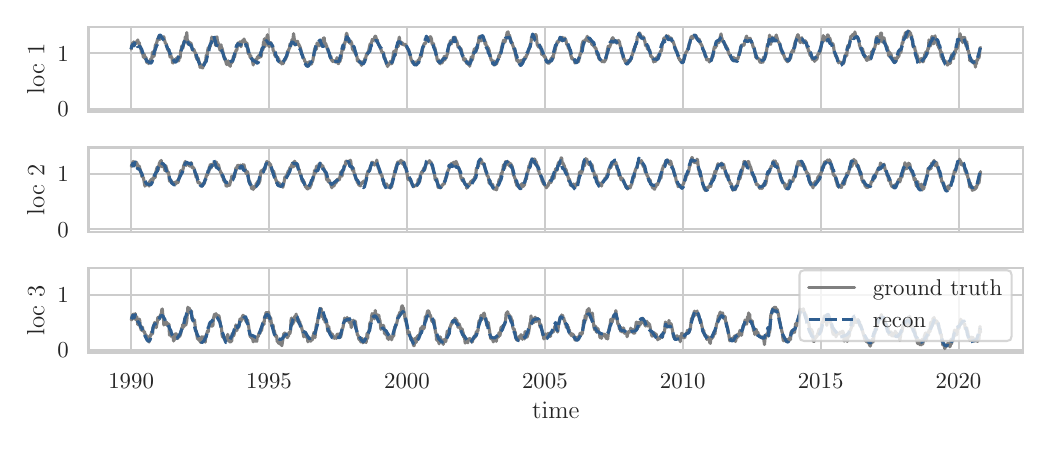}
    \caption{\textbf{SST-V2}: \emph{mechanistic identifier} reconstruction over long-term time series. Results are produced by concatenating subsequently predicted chunks.}
    \label{fig:recon_sst}
\end{figure}
The sea surface temperature data \textsc{SST-V2}~\citep{huang2021improvements} contains the \emph{weekly} sea surface temperature data from 1990 to 2023, on a resolution grid of $180 \times 360$ (latitudes $\times$ longitudes). 
An example input is depicted in~\cref{fig:example_sst}.
Each time series contains $1727$ times steps. 
To generate multiple views that share specific climate properties, we sample two different trajectories from a small neighbor region ($\pm 5^{\circ}$) along the \textbf{\emph{same latitude}}, as the latitude differs in the amount of direct sunlight thus directly affecting the sea surface temperature. 

For a fair comparison, we train all baselines till convergence following the setup summarized in~\cref{tab:train_params_sst}. 
Similar to the wind simulation data, we pre-process the \textsc{SST-V2} data using DCT and keep the first $25\%$ frequencies, as the latitude-related parameters of interest primarily influence long-term dependencies, such as seasonality, which are predominantly captured by low-frequency components.
~\Cref{fig:recon_sst} shows an example of predicted trajectories over three randomly sampled locations.
As for the downstream classification task, we use \texttt{LogisticRegression} model from \texttt{scikit-learn} in its default setting to evaluate the classification accuracy in~\cref{tab:summary_res}.

\begin{figure}[t]
    \centering
    \includegraphics[width=\linewidth]{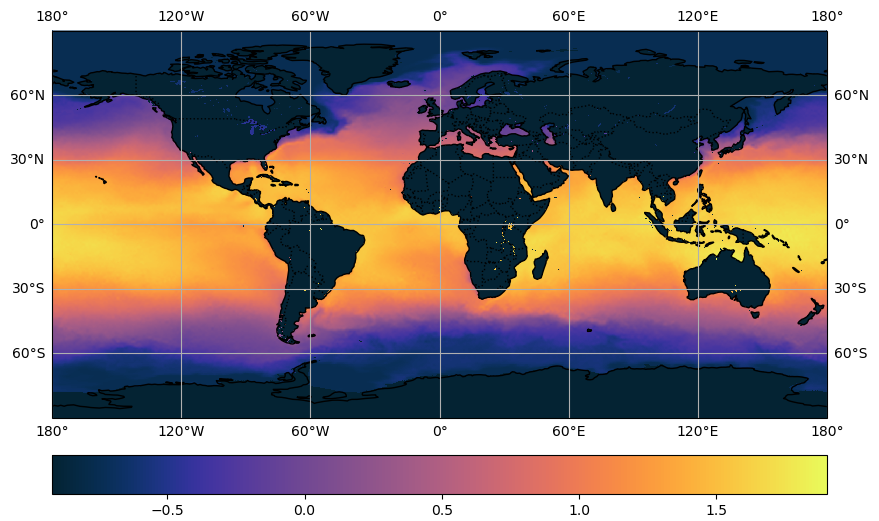}
    \caption{\textbf{Example of global sea surface temperature} in January, 1990.}
    \label{fig:example_sst}
\end{figure}

\begin{table}
    \centering
    \begin{minipage}[t]{\textwidth}
    \centering
        \caption{Training setup for sea surface temperature in~\cref{subsec:sst}. Non-applicable fields are marked with \xmark.}
        \label{tab:train_params_sst}
        \begin{tabular}{lllll}
        \toprule
         \rowcolor{Gray!20} & \textbf{Ada-GVAE} & \textbf{TI-MNN}& \textbf{Cont. Identifier} & \textbf{Mech. Identifier}\\
         \midrule
          Pre-process & DCT & DCT & DCT & DCT\\
          Encoder & {6-layer MLP} & {6-layer MLP} & {6-layer MLP} & {6-layer MLP}\\
          Decoder & {6-layer MLP}  & {6-layer MLP}  & \xmark & 3 proj. $\times$  6-layer MLP\\
          Time dim & $208$ & $208$ & $208$ & $208$\\
          State dim & 1 & 1  & 1 & 1 \\
          Hidden dim & 1024  & 1024 & 1024 & 1024 \\
          Latent dim & 20  & 20 & 20 & 20 \\
          Optimizer & {Adam} & {Adam} & {Adam} & {Adam}\\
          Adam: learning rate& {$\expnumber{1}{-5}$} & {$\expnumber{1}{-5}$} & {$\expnumber{1}{-5}$} & {$\expnumber{1}{-5}$}\\
          Adam: beta1 & {0.9} &{0.9} & {0.9}  & {0.9}\\
          Adam: beta2 & {0.999} & {0.999} &{0.999} & {0.999}\\
          Adam: epsilon & {$\expnumber{1}{-8}$} &{$\expnumber{1}{-8}$} & {$\expnumber{1}{-8}$} & {$\expnumber{1}{-8}$}\\
          Batch size& {2160}& {2160} & {2160} & {2160}\\
          Temperature $\tau$ & \xmark & \xmark &0.1 & \xmark\\
          Alignment reg. & \xmark & \xmark & \xmark & 10\\
          \# Initial values & $10$ & $10$ & \xmark & $10$\\
          \# Iterations &{< 30,000}&{< 30,000} & {< 30,000} & {< 30,000}\\
          \# Seeds & {3} & {3} & 3 & 3\\
          \bottomrule
        \end{tabular}
    \end{minipage}
\vspace{-10pt}
\end{table}

\subsection{Experiments and computational resources}
\label{app:compute_resources}
In this paper, we train four different models, each over three independent seeds. 
All 12 jobs ran with $24\mathrm{GB}$ of RAM, $8$ CPU cores, and a single node GPU, which is, in most cases, \texttt{NVIDIA GeForce RTX2080Ti}. 
Given different model sizes and convergence rates, the required amount of compute could vary slightly, despite the pre-fixed training epochs. Thus, we report an upper bound of the compute hours on \texttt{NVIDIA GeForce RTX2080Ti}. On average, all runs converge within 22 GPU hours. Therefore, the experimental results in this paper can be reproduced with 264 GPU hours.
\section{Discussion}
\begin{table}[htpb]
\caption{A non-exhaustive summary of latent regularizers in recent CRL approaches.
}
\centering
\renewcommand{\cmark}{\ding{51}}%
\renewcommand{\xmark}{\ding{55}}%
\renewcommand{\arraystretch}{2}%
\resizebox{\columnwidth}{!}{%
\begin{tabular}{lccl}
\toprule
\rowcolor{Gray!20}{\bf Principle} & {\bf Assumption} & {\bf Latent regularizer} & \multicolumn{1}{>{\centering\arraybackslash}m{80mm}}{\textbf{References}}\\
\midrule 
  \multirow{3}{*}{\textbf{\emph{multiview}}}  & \multirow{3}{*}{\emph{part. shared} latents} & \multirow{2}{*}{$\norm{g(\xb)_S - g(\tilde{\xb})_S}_2^2$} &~\citet{locatello2020weakly,von2021self}\\
  & & & \citet{daunhawer2023identifiability,yao2023multi} \\ 
 & & $\norm{g(\tilde{\xb}) - g(\xb) - \delta}_2^2$ & \citet{ahuja2022weakly}\\
  \midrule
  \multirow{3}{*}{\bf{\emph{sparsity}}}  & \multirow{2}{*}{\emph{sparse} causal graph}  & \multirow{1}{*}{$ \norm{g(\xb)}_1$}&  ~\citet{xu2024sparsity,lachapelle2022synergies} \\
&&
Spike and Slab prior
& \citet{tonolini2020variational,moran2021identifiable}\\
&
temporal sparsity
&$\text{KL} \left(q(z^t~|~x^t) || \hat{p}(z^t |  z^{<t}, a^{<t})\right)$ &\citet{lachapelle2022partial}\\
\bottomrule
\end{tabular}
}
\vspace{-10pt}
\label{tab:summary_work}
\end{table}

\label{app:discussion}

\looseness=-1 \textbf{Why mechanistic neural networks~\citep{pervez2024mechanistic}}. As mentioned in~\cref{sec:crl-guide}, the ODE solver $F$ given in~\cref{cor:full_ident,cor:partial_id} can be interpreted as the decoder in a traditional representation learning regime; however, several challenges arise when integrating ODE solving in the training loop: First of all, the ODE solver must be differentiable to utilize the automatic differentiation implementation of the state-of-the-art deep learning frameworks; this obstacle has been tacked by the line of work termed \emph{NeuralODE}, which models the ODE vector field using a neural network thus enable differentiability~\citep{chen2018neuralode,chen2021eventfn,kidger2021hey}. Nevertheless, most differentiable ODE solvers solve the ODE autoregressively and thus cannot be parallelized by the GPU very efficiently. Dealing with long-term trajectories (for example, weekly climate data during the last few decades) would be extremely computationally heavy. Therefore, we advocate for a time- and memory-efficient differentiable ODE solver: the mechanistic neural networks~\citep{pervez2024mechanistic}.

\looseness=-1\textbf{Latent regularizers in CRL.}
The framework proposed in~\cref{sec:crl-guide} can be generalized to many causal representation learning works, by specifying the latent regularizes according to individual assumptions and settings.
For example, in the multiview setting, the latent regularizer can be the $L_2$ \emph{\textcolor{Green}{alignment}} between the learned representations on the shared partition~\cref{eq:loss_part_ident}, as it was assumed that the paired views are generated based on this overlapping set of latents~\citep{locatello2019challenging,von2021self,yao2023multi};
in sparse causal representation learning
the underlying generative process assumes observations are generated from sparse latent variables; 
therefore, the proposed algorithms actively enforce some sparsity constraint on the learned representation~\citep{lachapelle2022partial,lachapelle2022synergies,xu2024sparsity,moran2021identifiable},
We provide a more extensive summary of different causal representation learning approaches and their corresponding latent regularizer in~\cref{tab:summary_work}. By replacing the \emph{\textcolor{Green}{alignment}} term~\pcref{cor:partial_id} with the specific latent constraints, one can plug in many causal representation learning algorithms to construct an identifiable neural emulator using our framework.


\looseness=-1 \textbf{Identifying time-varying parameters}
Time-varying parameters $\thetab(t)$ could also be potentially identified when they change sparsely in time. For example, a time-varying parameter $\thetab_k$ remains constant between $(t_k, t_{k+1})$. Then, the states in between $\xb(t), \xb(t+1), \dots, \xb(t+k)$ can be considered as multiple views that share the same parameter $\thetab_k$. Following this perspective, the time-invariant parameters considered in the scope of this paper remain consistent through the whole timespan $(0, t_{\max}$, thus all discretized states $\xb(t_1), \dots, \xb(t_T)$ are views that share this parameter. This inductive bias is directly built into the architecture design by inputting the whole trajectory into the encoder instead of doing so step by step (where the time axis is considered as batch dimension). From another angle, the time-varying parameters $\thetab(t)$ could be interpreted as a \emph{hidden} part of the state space vector $\xb(t)$ without an explicitly defined differential equation, which gives rise to a partial observable setup. This direction has been studied in the context of sparse system identification without explicit identifiability analysis~\citep{lu2022discovering}. In a more general setting, time-varying parameters has been considered as latent trajectories and extensively studied in the field of temporal causal representation learning~\citep{yao2022temporally,li2024and,song2024temporally}. A more detailed related work section in this regard is provided in~\cref{sec:related_work} under \textbf{CRL and dynamical systems}.

\looseness=-1 \textbf{Model evaluation on real-world data.}
A great obstacle hindering causal representation learning scaling to real-world data is that no ground truth latent variables are available. 
Since the methods aim to \emph{identify} the latent variables, it is hard to validate the identifiability theory without ground truth-generating factors. However, properly evaluating the CRL models on real-world data can be conducted by carefully designing causal downstream tasks, such as climate zone classification and ATE estimation shown in~\cref{subsec:sst}. Overall, we believe by incorporating domain knowledge of the applied datasets, we can use CRL to answer important causal questions from individual fields, thus indirectly validating the identifiability.

\newcommand{\headdef}{
\toprule
\rowcolor{Gray!20}
{\footnotesize \bf ID} & 
{\footnotesize \bf System description} & 
{\footnotesize \bf Equation} & 
{\footnotesize \bf RMSE (m $\pm$ std)}\\
\midrule
}

\renewcommand{\arraystretch}{1.1}%

{\everymath{\scriptstyle} 
\everydisplay{\scriptstyle}
{\footnotesize \tabcolsep=3pt  
\begin{xltabular}{\textwidth}{l|X|X|p{4.5em}}
\caption{\looseness=-1 \textbf{Theorem 3.1 validation using ODEs with known functional form}: Experiments on complex dynamical systems from ODEBench~\citep{dascoli2024odeformer} and Cart-Pole (inspired by~\citet{yao2022temporally}), for \textbf{exact parameter identification}. RMSE is computed over 100 randomly sampled parameter groups (nearby chaotic configuration for chaotic systems) and averaged over the parameter dimension.}
\vspace{1em}
\label{tab:odebench-3d}\\
\headdef\endfirsthead
\headdef\endhead
\bottomrule\endfoot
1  & RC-circuit (charging capacitor) & $\frac{\theta_{0} - \frac{x_{0}}{\theta_{1}}}{\theta_{2}}$ & $3e-2 \pm 2e-2$ \\ \hline
2  & Population growth (naive) & $\theta_{0} x_{0}$ & $2e-5 \pm 9e-6$ \\ \hline
3  & Population growth with carrying capacity & $\theta_{0} x_{0} \cdot \left(1 - \frac{x_{0}}{\theta_{1}}\right)$ &   $4e-5 \pm 2e-6$ \\ \hline
4  & RC-circuit with non-linear resistor (charging capacitor) & $-0.5 + \frac{1}{e^{\theta_{0} - \frac{x_{0}}{\theta_{1}}} + 1}$ & $8e-5 \pm 4e-4$ \\ \hline
5  & Velocity of a falling object with air resistance & $\theta_{0} - \theta_{1} x_{0}^{2}$ & $6e-4 \pm 4e-4$ \\ \hline
6  & Autocatalysis with one fixed abundant chemical & $\theta_{0} x_{0} - \theta_{1} x_{0}^{2}$ &  $9e-5 \pm 1e-5$ \\ \hline
7  & Gompertz law for tumor growth & $\theta_{0} x_{0} \log{\left(\theta_{1} x_{0} \right)}$ &   $3e-2 \pm 4e-2$ \\\hline
8  & Logistic equation with Allee effect & $\theta_{0} x_{0} \left(-1 + \frac{x_{0}}{\theta_{2}}\right) \left(1 - \frac{x_{0}}{\theta_{1}}\right)$ & $8e-3 \pm 9e-3$ \\ \hline
9  & Language death model for two languages & $\theta_{0} \cdot \left(1 - x_{0}\right) - \theta_{1} x_{0}$ & $1e-4 \pm 3e-5 $  \\ \hline
10 & Refined language death model for two languages & $\theta_{0} x_{0}^{\theta_{1}} \cdot \left(1 - x_{0}\right) - x_{0} \cdot \left(1 - \theta_{0}\right) \left(1 - x_{0}\right)^{\theta_{1}}$ & $1e-5 \pm 2e-5$ \\ \hline
11 & Naive critical slowing down (statistical mechanics) & $- x_{0}^{3}$ &  \xmark \\ \hline
12 & Photons in a laser (simple) & $\theta_{0} x_{0} - \theta_{1} x_{0}^{2}$ & $4e-3 \pm 3e-3$  \\ \hline
13 & Overdamped bead on a rotating hoop & $\theta_{0} \left(\theta_{1} \cos{\left(x_{0} \right)} - 1\right) \sin{\left(x_{0} \right)}$ & $3e-4 \pm 7e-5$ \\ \hline
14 & Budworm outbreak model with predation & $\theta_{0} x_{0} \cdot \left(1 - \frac{x_{0}}{\theta_{1}}\right) - \frac{\theta_{3} x_{0}^{2}}{\theta_{2}^{2} + x_{0}^{2}}$ &  $5e-3 \pm 9e-4$\\ \hline
15 & Budworm outbreak with predation (dimensionless) & $\theta_{0} x_{0} \cdot \left(1 - \frac{x_{0}}{\theta_{1}}\right) - \frac{x_{0}^{2}}{x_{0}^{2} + 1}$ & $ 4e-5 \pm 5e-6$  \\ \hline
16 & Landau equation (typical time scale tau = 1) & $\theta_{0} x_{0} - \theta_{1} x_{0}^{3} - \theta_{2} x_{0}^{5}$ & $5e-3 \pm 1e-2$ \\ \hline
17 & Logistic equation with harvesting/fishing & $\theta_{0} x_{0} \cdot \left(1 - \frac{x_{0}}{\theta_{1}}\right) - \theta_{2}$ & $6e-4 \pm 2e-4 $  \\ \hline
18 & Improved logistic equation with harvesting/fishing & $\theta_{0} x_{0} \cdot \left(1 - \frac{x_{0}}{\theta_{1}}\right) - \frac{\theta_{2} x_{0}}{\theta_{3} + x_{0}}$ &  $ 4e-2 \pm 2e-2$ \\ \hline
19 & Improved logistic equation with harvesting/fishing (dimensionless) & $- \frac{\theta_{0} x_{0}}{\theta_{1} + x_{0}} + x_{0} \cdot \left(1 - x_{0}\right)$ & $4e-5 \pm 2e-5$ \\ \hline
20 & Autocatalytic gene switching (dimensionless) & $\theta_{0} - \theta_{1} x_{0} + \frac{x_{0}^{2}}{x_{0}^{2} + 1}$ & $2e-5 \pm 1e-5$  \\ \hline
21 & Dimensionally reduced SIR infection model for dead people (dimensionless) & $\theta_{0} - \theta_{1} x_{0} - e^{- x_{0}}$ & $8e-6 \pm 2e-6$ \\ \hline
22 & Hysteretic activation of a protein expression (positive feedback, basal promoter expression) & $\theta_{0} + \frac{\theta_{1} x_{0}^{5}}{\theta_{2} + x_{0}^{5}} - \theta_{3} x_{0}$ &   $3e-2 \pm 2e-2$ \\\hline
23 & Overdamped pendulum with constant driving torque/fireflies/Josephson junction (dimensionless) & $\theta_{0} - \sin{\left(x_{0} \right)}$ & $8e-6 \pm 8e-7$ \\ \hline
24 & Harmonic oscillator without damping & $\begin{cases}&x_{1}\\&- \theta_{0} x_{0}\end{cases}$ & $4e-4 \pm 2e-5$  \\ \hline
25 & Harmonic oscillator with damping & $\begin{cases}&x_{1}\\&- \theta_{0} x_{0} - \theta_{1} x_{1}\end{cases}$ & $9e-4 \pm 1e-4$  \\ \hline
26 & Lotka-Volterra competition model (Strogatz version with sheeps and rabbits) & $\begin{cases}&x_{0} \left(\theta_{0} - \theta_{1} x_{1} - x_{0}\right)\\&x_{1} \left(\theta_{2} - x_{0} - x_{1}\right)\end{cases}$ & $7e-2 \pm 4e-2$ \\ \hline
27 & Lotka-Volterra simple (as on Wikipedia) & $\begin{cases}&x_{0} \left(\theta_{0} - \theta_{1} x_{1}\right)\\&- x_{1} \left(\theta_{2} - \theta_{3} x_{0}\right)\end{cases}$ & $9e-3 \pm 1e-3$  \\ \hline
28 & Pendulum without friction & $\begin{cases}&x_{1}\\&- \theta_{0} \sin{\left(x_{0} \right)}\end{cases}$ & $6e-5 \pm 2e-5$  \\ \hline
29 & Dipole fixed point & $\begin{cases}&\theta_{0} x_{0} x_{1}\\&- x_{0}^{2} + x_{1}^{2}\end{cases}$ & $2e-3 \pm 3e-4$ \\ \hline
30 & RNA molecules catalyzing each others replication & $\begin{cases}&x_{0} \left(- \theta_{0} x_{0} x_{1} + x_{1}\right)\\&x_{1} \left(- \theta_{0} x_{0} x_{1} + x_{0}\right)\end{cases}$ &  $7e-6 \pm 4e-7$ \\ \hline
31 & SIR infection model only for healthy and sick & $\begin{cases}&- \theta_{0} x_{0} x_{1}\\&\theta_{0} x_{0} x_{1} - \theta_{1} x_{1}\end{cases}$ & $4e-5 \pm 2e-5$  \\ \hline
32 & Damped double well oscillator & $\begin{cases}&x_{1}\\&- \theta_{0} x_{1} - x_{0}^{3} + x_{0}\end{cases}$ & $3e-4 \pm 5e-5$ \\ \hline
33 & Glider (dimensionless) & $\begin{cases}&- \theta_{0} x_{0}^{2} - \sin{\left(x_{1} \right)}\\&x_{0} - \frac{\cos{\left(x_{1} \right)}}{x_{0}}\end{cases}$ & $3e-4 \pm 2e-4$ \\ \hline
34 & Frictionless bead on a rotating hoop (dimensionless) & $\begin{cases}&x_{1}\\&\left(- \theta_{0} + \cos{\left(x_{0} \right)}\right) \sin{\left(x_{0} \right)}\end{cases}$ & $4e-5 \pm 2e-5$ \\ \hline
35 & Rotational dynamics of an object in a shear flow & $\begin{cases}&\cos{\left(x_{0} \right)} \cot{\left(x_{1} \right)}\\&\left(\theta_{0} \sin^{2}{\left(x_{1} \right)} + \cos^{2}{\left(x_{1} \right)}\right) \sin{\left(x_{0} \right)}\end{cases}$ & $3e-3 \pm 3e-4$ \\ \hline
36 & Pendulum with non-linear damping, no driving (dimensionless) & $\begin{cases}&x_{1}\\&- \theta_{0} x_{1} \cos{\left(x_{0} \right)} - x_{1} - \sin{\left(x_{0} \right)}\end{cases}$ & $5e-4 \pm 1e-4$ \\ \hline
37 & Van der Pol oscillator (standard form) & $\begin{cases}&x_{1}\\&- \theta_{0} x_{1} \left(x_{0}^{2} - 1\right) - x_{0}\end{cases}$ & $4e-4 \pm 6e-5$ \\ \hline
38 & Van der Pol oscillator (simplified form from Strogatz) & $\begin{cases}&\theta_{0} \left(- \frac{x_{0}^{3}}{3} + x_{0} + x_{1}\right)\\&- \frac{x_{0}}{\theta_{0}}\end{cases}$ & $2e-3 \pm 1e-4$ \\ \hline
39 & Glycolytic oscillator, e.g., ADP and F6P in yeast (dimensionless) & $\begin{cases}&\theta_{0} x_{1} + x_{0}^{2} x_{1} - x_{0}\\&- \theta_{0} x_{0} + \theta_{1} - x_{0}^{2} x_{1}\end{cases}$ & $8e-4 \pm 7e-5$\\ \hline
40 & Duffing equation (weakly non-linear oscillation) & $\begin{cases}&x_{1}\\&\theta_{0} x_{1} \cdot \left(1 - x_{0}^{2}\right) - x_{0}\end{cases}$ & $9e-4 \pm 1e-4$ \\ \hline
41 & Cell cycle model by Tyson for interaction between protein cdc2 and cyclin (dimensionless) & $\begin{cases}&\theta_{0} \left(\theta_{1} + x_{0}^{2}\right) \left(- x_{0} + x_{1}\right) - x_{0}\\&\theta_{2} - x_{0}\end{cases}$ & $4e-2 \pm 2e-2$ \\ \hline
42 & Reduced model for chlorine dioxide-iodine-malonic acid reaction (dimensionless) & $\begin{cases}&\theta_{0} - \frac{\theta_{1} x_{0} x_{1}}{x_{0}^{2} + 1} - x_{0}\\&\theta_{2} x_{0} (- \frac{x_{1}}{x_{0}^{2} + 1} + 1\end{cases}$ & $3e-3 \pm 3e-4$ \\ \hline
43 & Driven pendulum with linear damping / Josephson junction (dimensionless) & $\begin{cases}&x_{1}\\&\theta_{0} - \theta_{1} x_{1} - \sin{\left(x_{0} \right)}\end{cases}$ & $6e-5 \pm 3e-5$ \\ \hline
44 & Driven pendulum with quadratic damping (dimensionless) & $\begin{cases}&x_{1}\\&\theta_{0} - \theta_{1} x_{1} \left|{x_{1}}\right| - \sin{\left(x_{0} \right)}\end{cases}$ & $2e-5 \pm 6e-6$ \\ \hline
45 & Isothermal autocatalytic reaction model by Gray and Scott 1985 (dimensionless) & $\begin{cases}&\theta_{0} \cdot \left(1 - x_{0}\right) - x_{0} x_{1}^{2}\\&- \theta_{1} x_{1} + x_{0} x_{1}^{2}\end{cases}$ & $2e-5 \pm 3e-5$ \\ \hline
46 & Interacting bar magnets & $\begin{cases}&\theta_{0} \sin{\left(x_{0} - x_{1} \right)} - \sin{\left(x_{0} \right)}\\&- \theta_{0} \sin{\left(x_{0} - x_{1} \right)} - \sin{\left(x_{1} \right)}\end{cases}$ & $1e-6 \pm 1e-6$ \\ \hline
47 & Binocular rivalry model (no oscillations) & $\begin{cases}&- x_{0} + \frac{1}{e^{\theta_{0} x_{1} - \theta_{1}} + 1}\\&- x_{1} + \frac{1}{e^{\theta_{0} x_{0} - \theta_{1}} + 1}\end{cases}$ & $7e-4 \pm 1e-4$ \\ \hline
48 & Bacterial respiration model for nutrients and oxygen levels & $\begin{cases}&\theta_{0} - \frac{x_{0} x_{1}}{\theta_{1} x_{0}^{2} + 1} - x_{0}\\&\theta_{2} - \frac{x_{0} x_{1}}{\theta_{1} x_{0}^{2} + 1}\end{cases}$ & $4e-2 \pm 2e-2$ \\ \hline
49 & Brusselator: hypothetical chemical oscillation model (dimensionless) & $\begin{cases}&\theta_{1} x_{0}^{2} x_{1} - x_{0} \left(\theta_{0} + 1\right) + 1\\&\theta_{0} x_{0} - \theta_{1} x_{0}^{2} x_{1}\end{cases}$ & $2e-2 \pm 5e-3$ \\ \hline
50 & Chemical oscillator model by Schnackenberg 1979 (dimensionless) & $\begin{cases}&\theta_{0} + x_{0}^{2} x_{1} - x_{0}\\&\theta_{1} - x_{0}^{2} x_{1}\end{cases}$ & $1e-6 \pm 4e-7$ \\ \hline
51 & Oscillator death model by Ermentrout and Kopell 1990 & $\begin{cases}&\theta_{0} + \sin{\left(x_{1} \right)} \cos{\left(x_{0} \right)}\\&\theta_{1} + \sin{\left(x_{1} \right)} \cos{\left(x_{0} \right)}\end{cases}$ & $1e-5 \pm 9e-6$ \\ \hline
52 & Maxwell-Bloch equations (laser dynamics) & $\begin{cases}&\theta_{0} \left(- x_{0} + x_{1}\right)\\&\theta_{1} \left(x_{0} x_{2} - x_{1}\right)\\&\theta_{2} \left(- \theta_{3} x_{0} x_{1} + \theta_{3} - x_{2} + 1\right)\end{cases}$ & 
$4e-2 \pm 4e-2$\\\hline
53 & Model for apoptosis (cell death) & $\begin{cases}&\theta_{0} - \theta_{4} x_{0} - \frac{\theta_{5} x_{0} x_{1}}{\theta_{9} + x_{0}}\\&\theta_{1} x_{2} \left(\theta_{8} + x_{1}\right) - \frac{\theta_{2} x_{1}}{\theta_{6} + x_{1}} - \frac{\theta_{3} x_{0} x_{1}}{\theta_{7} + x_{1}}\\&- \theta_{1} x_{2} \left(\theta_{8} + x_{1}\right) + \frac{\theta_{2} x_{1}}{\theta_{6} + x_{1}} + \frac{\theta_{3} x_{0} x_{1}}{\theta_{7} + x_{1}}\end{cases}$ & $1e-2 \pm 5e-3$\\\hline
54 & Lorenz equations in well-behaved periodic regime & $\begin{cases}&\theta_{0} \left(- x_{0} + x_{1}\right)\\&\theta_{1} x_{0} - x_{0} x_{2} - x_{1}\\&- \theta_{2} x_{2} + x_{0} x_{1}\end{cases}$ & $1e-2 \pm 4e-3$ \\\hline
55 & Lorenz equations in complex periodic regime & $\begin{cases}&\theta_{0} \left(- x_{0} + x_{1}\right)\\&\theta_{1} x_{0} - x_{0} x_{2} - x_{1}\\&- \theta_{2} x_{2} + x_{0} x_{1}\end{cases}$ & $8e-2 \pm 5e-2$ \\\hline
56 & Lorenz equations (chaotic) & $\begin{cases}&\theta_{0} \left(- x_{0} + x_{1}\right)\\&\theta_{1} x_{0} - x_{0} x_{2} - x_{1}\\&- \theta_{2} x_{2} + x_{0} x_{1}\end{cases}$ & 
$4e-2 \pm 9e-3$\\\hline
57 & Rössler attractor (stable fixed point) & $\begin{cases}&\theta_{3} \left(- x_{1} - x_{2}\right)\\&\theta_{3} \left(\theta_{0} x_{1} + x_{0}\right)\\&\theta_{3} \left(\theta_{1} + x_{2} \left(- \theta_{2} + x_{0}\right)\right)\end{cases}$ & $3e-2 \pm 2e-2 $ \\\hline
58 & Rössler attractor (periodic) & $\begin{cases}&\theta_{3} \left(- x_{1} - x_{2}\right)\\&\theta_{3} \left(\theta_{0} x_{1} + x_{0}\right)\\&\theta_{3} \left(\theta_{1} + x_{2} \left(- \theta_{2} + x_{0}\right)\right)\end{cases}$ & $2e-2 \pm 2e-2$ \\\hline
59 & Rössler attractor (chaotic) & $\begin{cases}&\theta_{3} \left(- x_{1} - x_{2}\right)\\&\theta_{3} \left(\theta_{0} x_{1} + x_{0}\right)\\&\theta_{3} \left(\theta_{1} + x_{2} \left(- \theta_{2} + x_{0}\right)\right)\end{cases}$ & $4e-2 \pm 4e-2$\\\hline
60 & Aizawa attractor (chaotic) & $\begin{cases}&- \theta_{3} x_{1} + x_{0} \left(- \theta_{1} + x_{2}\right)\\&\theta_{3} x_{0} + x_{1} \left(- \theta_{1} + x_{2}\right)\\&\theta_{0} x_{2} + \theta_{2} + \theta_{5} x_{0}^{3} x_{2} - 1/3 x_{2}^{3} - \left(x_{0}^{2} + x_{1}^{2}\right) \left(\theta_{4} x_{2} + 1\right)\end{cases}$ & $8e-4 \pm 2e-4$\\\hline
61 & Chen-Lee attractor (chaotic) & $\begin{cases}&\theta_{0} x_{0} - x_{1} x_{2}\\&\theta_{1} x_{1} + x_{0} x_{2}\\&\theta_{2} x_{2} + \frac{x_{0} x_{1}}{\theta_{3}}\end{cases}$ & 
$4e-2 \pm 2e-2$\\\hline
62 & Binocular rivalry model with adaptation (oscillations) & $\begin{cases}&- x_{0} + \frac{1}{e^{\theta_{0} x_{2} + \theta_{1} x_{1} - \theta_{2}} + 1}\\&\theta_{3} \left(x_{0} - x_{1}\right)\\&- x_{2} + \frac{1}{e^{\theta_{0} x_{0} + \theta_{1} x_{3} - \theta_{2}} + 1}\\&\theta_{3} \left(x_{2} - x_{3}\right)\end{cases}$ & 
$1e-3 \pm 2e-4$\\\hline
63 & SEIR infection model (proportions) & $\begin{cases}&- \theta_{1} x_{0} x_{2}\\&- \theta_{0} x_{1} + \theta_{1} x_{0} x_{2}\\&\theta_{0} x_{1} - \theta_{2} x_{2}\\&\theta_{2} x_{2}\end{cases}$ & $1e-4 \pm 5e-5$\\
\hline
- & Cart-Pole (inverted pendulum) & $\begin{cases}
\ddot{x} &= \frac{F + m_p l (\dot{\alpha}^2 \sin(\alpha) - \ddot{\alpha} \cos(\alpha))}{m_c + m_p} \\
\ddot{\alpha} &= \frac{g \sin(\alpha) - \cos(\alpha) \frac{F + m_p l \dot{\alpha}^2 \sin(\alpha)}{m_c + m_p}}{l \left(\frac{4}{3} - \frac{m_p \cos^2(\alpha)}{m_c + m_p}\right)}
\end{cases}$ & $1e-4 \pm 6e-05$ \\
\bottomrule
\end{xltabular}
}
} 

\clearpage
\include{checklist}
\clearpage
\let\clearpage\relax
\end{document}